\newtheorem{theorem}{Theorem}[section]
\newtheorem{lemma}[theorem]{Lemma}
\title{vONTSS: vMF based semi-supervised neural topic 
modeling\\ with optimal transport}
\author{Weijie Xu,  Xiaoyu Jiang, Srinivasan H. Sengamedu, Francis Iannacci, Jinjin Zhao \\
  Amazon \\
  \texttt{weijiexu@amazon.com} \\}
\begin{document}
\maketitle
\begin{abstract}

%problem statement: unsupervised topic modeling is not coherent qualitatively, semi-supervised topic modeling is not stable, existed weakly supervised text classification is time-consuming to train and rely on transfer learning. 
%We developed vMF based topic modeling achieves good coherence qualitatively and quantitatively and also achieve good clusterability. Because of its good clusterability, we also developed the semi-supervised topic modeling leverage optimal transport. It outperform in classification accuracy and stability. If we do not consider external knowledge, it achieves similar classification performance as state of the art performance while is much efficient.

Recently, Neural Topic Models (NTM), inspired by variational autoencoders, have attracted a lot of research interest; however, these methods have limited applications in the real world due to the challenge of incorporating human knowledge. This work presents a semi-supervised neural topic modeling method, vONTSS, which uses von Mises-Fisher (vMF) based variational autoencoders and optimal transport. 
%We modify vMF with temperature to retain its regularization effect 
When a few keywords per topic are provided, vONTSS in the semi-supervised setting generates potential topics and optimizes topic-keyword quality and topic classification. Experiments show that vONTSS outperforms existing semi-supervised topic modeling methods in classification accuracy and diversity. vONTSS also supports unsupervised topic modeling. Quantitative and qualitative experiments show that vONTSS in the unsupervised setting outperforms recent NTMs on multiple aspects: vONTSS discovers highly clustered and coherent topics on benchmark datasets. It is also much faster than the state-of-the-art weakly supervised text classification method while achieving similar classification performance.
We further prove the equivalence of optimal transport loss and cross-entropy loss at the global minimum.

\vspace{0.5em}
\hspace{.5em}\includegraphics[width=1.25em,height=1.25em]{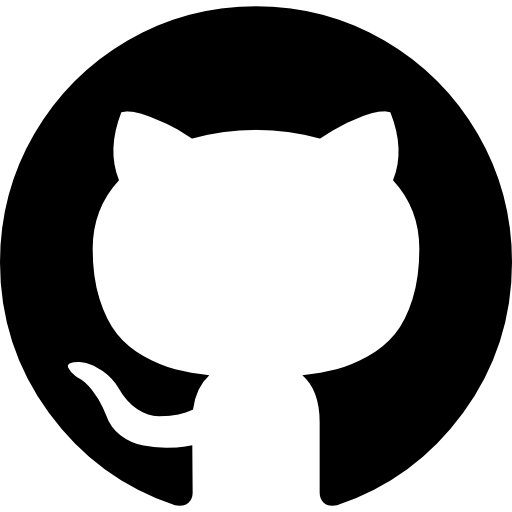}\hspace{.75em}\parbox{\dimexpr\linewidth-2\fboxsep-2\fboxrule}{\url{https://github.com/xuweijieshuai/vONTSS}}
\vspace{-.5em}
\end{abstract}
\section{Introduction}

% topic modeling and use case
Topic modeling methods such as \cite{blei2003latent} is an unsupervised approach for discovering latent structure in documents and achieving great performance \cite{blei2009nested}. Topic modeling methods take a list of documents as input. It generates the defined number of topics. It can further produce keywords and related documents for each topic.
In recent years, topic modeling methods have been widely used in many fields such as finance \cite{https://doi.org/10.1111/eufm.12326}, healthcare \cite{DBLP:journals/corr/abs-1711-10960}, education \cite{zhao2020targeted}, marketing \cite{Reisenbichler2019} and social science \cite{762586}. With the development of Variational Autoencoder (VAE)  \cite{https://doi.org/10.48550/arxiv.1312.6114}, Neural Topic Model \cite{miao2018discovering,dieng2020topic} has attracted attention as it enjoys better flexibility and scalability. However, recent research~\cite{hoyle2021automated} shows that the topics generated by these methods are not aligned with human perceptions.
To incorporate users' domain knowledge into the model, semi-supervised topic modeling methods become an active area of research \cite{mao2012sshlda,jagarlamudi-etal-2012-incorporating, gallagher2018anchored} and applications \cite{choi2017application, cao2019qos,kim2013mining}. Semi-supervised topic modeling methods take a few keywords as input and generate topics based on these keywords. Poeple use semi-supervised topic modeling methods because they want each topic include certain keywords and incorporate their domain expertise in their generated topics.
Traditional semi-supervised topic modeling methods fail to utilize semantic information of the corpus, causing low classification accuracy and high variance~\cite{chiu2022joint}. %For recent neural based methods, experiments \cite{doan2021benchmarking} show signs of large variations. Majority of  neural based topic modeling methods rely on Gaussian latent variables. %, making latent space centered and limiting the flexibility to incorporate keywords. 
% Thus, these methods are less suitable for semi-supervised topic modeling.

% They want to improve existed topics, discovery other topics and label documents. 
% It can be used for topic discovery \cite{hu2014interactive} and topics classification. \cite{sun2020neural} %gives few examples here.\\

%The Gaussian prior does not induce clusterability and coherence. In addition, these methods are less suitable for semi-supervised topic modeling. % So we use vMF distribution which is defined on the sphere.

To solve these problems, we propose a von Mises-Fisher(vMF) based semi-supervised neural topic modeling method using optimal transport (vONTSS). % We introduce temperature function and flexible scale parameter on vMF distribution to induce better coherence and clusterability among topics. We further extend it to semi-supervised setting with optimal transport. vONTSS takes few keywords per topic as input and creates topics that align with the keywords input and the semantic structure of the documents. 
We use the encoder-decoder framework for our model. The encoder uses modified vMF priors for latent distributions. The decoder uses a word-topic similarity matrix based on spherical embeddings. We use optimal transport to extend it to a semi-supervised version. vONTSS has the following enhancements:
%While this architecture has been used before~\cite{miao2016neural}, vONTSS has the following enhancements:

1. We introduce the notion of temperature and make the spread of vMF distribution ($\kappa$) learnable, which leads to strong coherence and cluster-inducing properties.

% 2.We extend the decoder to support semi-supervised modeling using optimal transport. To address the stability of topics in semi-supervised setting, we leverage Optimal Transport in the loss function.

% 1. A temperature function along and flexible scale parameter are invented to overcome vMF's limited expressiveness in the probability space. We explain the intuition of the design. 
2. vONT (In the rest of the paper, we use vONT to refer to the unsupervised topic model and vONTSS to semi-supervised version.) achieves the best coherence and clusterability compared to the state-of-the-art approaches on benchmark datasets.

3. We perform the human evaluation of the results for intrusion and rating tasks, and vONT outperforms other techniques.

4. Use of optimal transport to extend the stability of the model in the semi-supervised setting. The semi-supervised version is fast to train and achieves good alignment between keywords sets and topics. We also prove its theoretical properties.

% 3. We show theoretically, OT's loss is equivalent to cross entropy loss under the global minimum condition. Furthermore, OT induces one to one mapping between sets of keywords and topics. 

% 3. We demonstrate our approaches outperform the state-of-the-art approaches on benchmark datasets in both unsupervised and semi-supervised settings. 

% In unsupervised scenario, we design a qualitative and quantitative study to show that vONTSS is the most coherent method. 

% 3. We demonstrate that vONT achieves the best coherence and clusterability score compare to the state-of-the-art approaches on benchmark datasets 

5. In the semi-supervised scenario, we demonstrate the vONTSS achieves the best classification accuracy and lowest variance compared to other semi-supervised topic modeling methods.

6. We also show that vONTSS achieves similar performance as the state-of-the-art weakly text classification method while being much more efficient.

% The remainder of this paper is organized as follows. Section 2 provides a review of related work. Section 3 describes the intuition behind our methods and proves their theoretical properties. Section 4 includes experimentation in various settings and reports performance. Section 5 draws the conclusion and possible future works. 

%show 1 keyword 2 keyword performance ...
%show the speed, 
%show an example of how topics grow over time. 

%methods to benchmark 
%1 Guided LDA, CorEX, #word vs performance, #seeded words vs # non supervised clusters 
%2 WeSH and sister methods, Guided LDA, CorEX 20 news (top 4 cate), AG News, 
%3 Induction Network, TAM, WeSH with five labels
 
\section{Related Methods and Challenges}

\begin{table}[t]
\caption{Description of the notations used in this work}
\label{sample-table}

\vskip 0.15in

\begin{center}
\begin{small}
%\begin{sc}
\scalebox{0.85} {
\begin{tabular}{lll}
\toprule
Notion & Description & Dimension\\
\hline
\midrule
$M$ & topic dimension  & \\
$V$ & vocabulary dimension  & \\
$Z$ & topic proportions  & $R^{M}$\\
% $z$ & single topic &  $R$\\
$X$ & bow of a document  & $R^{V}$\\
$x$ & represent word &  $R$\\
$\theta$ & decoder  & \\
$\phi$ & encoder  & \\
$L_{\theta, \phi}(X)$ & loss function for NTM & \\
$e_{V}$ & word embedding matrix & $R^{V \times D}$\\
$e_{T}$ & topic embedding matrix & $R^{M \times D}$\\
$E$ &  topic-word matrix & $R^{M \times V}$\\
$\mu$ & vmf direction parameter   &$R^{M}$ \\
$\kappa$ &vmf concentration parameter  & $R$\\
$P$ & weight matrix for OT  & $R^{|T| \times |S|}$\\
$C$ & cost matrix for OT  & $R^{|T| \times |S|}$\\
$\eta$ & sample from vMF & $R^{M}$\\
$s$ & keywords set & \\
$S$ & group of keywords set & \\
$t$ & topic & \\
$T$ & group of topics & \\
$(s, t)$ & topic keywords set pair & \\
$L_{OT}$ & optimal transport loss & \\
$\lambda$ & entropy penalty weights & \\
$L_{ce}$ & cross-entropy loss & \\
$L_{X, T}$ & loss function for vONTSS & \\
$\alpha, \delta$ & parameters $L_{X, T}$ & \\
\hline
\end{tabular}}
%\end{sc}
\end{small}
\end{center}
\vskip -0.1in
\end{table}

% \subsection{Background knowledge}

%This section talks about knwoledge distillation

% The section needs to introduce LDA, ETM, OTETM, knowledge distillnation TM, 

\textbf{NTM} Variational Autoencoders (VAE)~\cite{https://doi.org/10.48550/arxiv.1312.6114} enable efficient variational inference. NTM~\cite{https://doi.org/10.48550/arxiv.1511.06038} uses $Z \in R^{M}$ as topic proportions over M topics and $X \in R^{V}$ to represent word count for the dataset with V unique words. NTM assumes that for any document, Z is generated from a document satisfying the prior distribution $p(Z)$ and X is generated by the conditional distribution $p_{\theta}(X|Z)$ where $\theta$ denotes a decoder. Ideally, we want to optimize the marginal likelihood $p_{\theta}(X) = \int{p(Z)p_{\theta}(X|Z)dZ}$. Due to the intractability of integration, NTM introduces $q_{\phi}(Z|X)$, a variational approximation to the posterior $p(Z|X)$. The loss function of NTM is: 
\begin{equation}
\begin{aligned}
L_{\theta, \phi} &= (-E_{q_{\phi}(Z|X)}[\log p_{\theta} (X|Z)] \\
    &+ KL[q_{\phi}(Z|X) || p(Z)] )
\end{aligned}
\label{eq10}
\end{equation} 
NTM usually utilizes a neural network with softmax to approximate $p_{\theta}(X|Z) := softmax(Wz)$ \cite{srivastava2017autoencoding}. NTM selects Gaussian \cite{miao2016neural}, Gamma~\cite{zhang2020whai} and Dirichlet distribution \cite{JMLR:v20:18-569} to approximate $p(Z)$. The second term Kullback-Leibler (KL) divergence regularizes  $q_{\phi}(Z|X)$ to be close to $p(Z)$. \textit{NTM has several problems in practice.} Firstly, it does not capture the semantic relationship between words. Secondly, the generated topics are not aligned with human interpretations. \cite{hoyle2021automated}.  Thirdly, using Gaussian prior may risk gravitating latent space toward the center and produce tangled representations among classes of documents. This is due to the fact that gaussian density presents a concentrated mass around the origin in low dimensional settings \cite{2013} and resembles a uniform distribution in high dimensional settings.

\textit{Extending NTM to semi-supervised version is also challenging.} $L_{\theta, \phi}$ is not always aligned with classification-related loss such as cross-entropy loss as identified by existing research \cite{https://doi.org/10.48550/arxiv.2204.03208}. To be specific, cross-entropy makes keywords sets align with assigned topics, while reconstruction loss($-E_{q_{\phi}(Z|X)}[\log p_{\theta} (X|Z)]$) makes latent space as representative as possible. Thus, existing semi-supervised NTM methods either are not stable \cite{wang2021neural,Harandizadeh_2022} or need certain adaptions~\cite{gemp2019weakly}. 

\textbf{Embedding Topic Model (ETM)} Pre-trained word embeddings such as Glove \cite{pennington-etal-2014-glove} and word2vec \cite{mikolov2013efficient} have the ability to capture semantic information, which is missing from basic bag-of-word (BoW) representations. 
They can serve as additional information to guide topic discovery. Dieng \cite{dieng2020topic} proposes ETM to use a vocabulary embedding matrix $e_{V} \in R^{V \times D}$ where D represents the dimension of word embeddings. The decoder $\phi$ learns a topic embedding matrix $e_{T} \in R^{M \times D}$. We denote topic to word distribution $ softmax(e_{T}  e_{V}^{T})$ as E  \begin{equation} p_{\theta}(X|Z) := Z \times E \label{eq11} \end{equation}  However since there exists some common words that are related to many other words, these common words' embeddings may be highly correlated with few topics' embeddings. Thus, \textit{ETM does not produce diverse topics}~\cite{zhao2020neural}. Besides, using pre-trained embeddings cannot help the model identify domain-specific topics. For example, topics related to COVID-19 are more likely to be expressed by a few topics instead of one single topic using pre-trained Glove embeddings~\cite{pennington2014glove} since COVID-19 is not in the embeddings.

\textbf{von Mises-Fisher} In low dimensions, the Gaussian density presents a concentrated probability mass around the origin. This is problematic when the data is partitioned into multiple clusters. An ideal prior should be non-informative and uniform over the parameter space. Thus, the von Mises-Fisher(vMF) is used in VAE. vMF is a distribution on the (M-1)-dimensional sphere in $R^{M}$, parameterized by $\mu \in R^{M}$ where $||\mu|| = 1$ and a concentration parameter $\kappa \in R_{\geq 0}$. The probability density function of the vMF distribution for $z \in R^{D}$ is defined as:

$$q(Z|\mu, \kappa) = C_{M}(\kappa) exp(\kappa\mu^{T}Z)$$
$$C_{M}(\kappa) = \frac{\kappa^{\frac{M}{2} - 1}}{(2\pi)^{\frac{M}{2}} I_{\frac{M}{2} - 1}(\kappa)} + \log 2$$

where $I_{v}$ denotes the modified Bessel function of the first kind at order v. The KL divergence with vMF(., 0) \cite{davidson2018hyperspherical} is

$$KL(vMF(\mu, \kappa)|vMF(.,0)) = \kappa\frac{I_{\frac{M}{2}}(\kappa)}{I_{\frac{M}{2}-1}(\kappa)} $$
$$+ (\frac{M}{2} - 1) \log \kappa - \frac{M}{2} \log (2\pi)  - \log I_{\frac{M}{2}-1}(\kappa) $$ $$+ \frac{M}{2} \log \pi + \log 2 + \log \Gamma(\frac{M}{2})$$

vMF based VAE has better clusterability of data points especially in low dimensions \cite{guu2018generating}. However, vMF distribution has limited expressibility when its sample is translated into a probability vector. Due to the unit constraint, $softmax$ of any sample of vMF will not result in high probability on any topic even under strong direction $\mu$. For example, when topic dimension $M$ equals to 10, the highest topic proportion of a certain topic is 0.23. \textit{Most of vMF-based topic modeling methods are not VAE based and very slow to train as summarized in Appendix~\ref{vMF}.}

\section{Proposed Methods}
% %This paragraph introduces word embedding and spherecal word embeddings
% Word embedding \cite{mikolov2013efficient}\cite{mikolov2013distributed} are trained to capture semantic similarity of textual units based on co-occurrence statistics and demonstrate effectiveness on various tasks\cite{lee2020actuarial}\cite{li2018word}.The idea is to maximise the similarity between the vectors for words which appear close together and minimize the similarity of words that do not. Negative sampling makes these methods much faster by subsampling few words that do not appear together instead of sampling all of them. Spherical embedding\cite{meng2019spherical} learns text embeddings in a spherical space by imposing unit norm constraints on embeddings. It in general performs well on word similarity related tasks.

% %This section introduces optimal transport and sinkhorn distances

The architecture of vONTSS is shown in Figure~\ref{fig:people4}. At a high level, our encoder network $\phi$ transforms the BoW representation of the document $X_{d}$ into a latent vector generated by vmf distribution and generates a sample $\eta_{d}$. We then apply a temperature function $\tau$ and softmax on this sample to get a probabilistic topic distribution $z_{d}$. Lastly, our decoder uses a modified topic-word matrix $E$ to reconstruct $X_{d}$'s BoW representation. To extend into semi-supervised setting, we leverage optimal transport to match keywords' set with topics. The encoder network $\phi$ and generative model parameter $\theta$ are learned jointly during the training process.

% infers the posterior distribution $p(Z|X_{d})$ which can be used to sample the latent space $\eta \sim p(\phi;X)$. Contrary to ordinary VAE, our latent space is a probability vector of topics, hence a softmax is applied to sampled $\eta$. Formally, for document d, the latent space representation will be $Z_{d} \sim softmax(\eta_{d}); \eta_{d} \sim p(\theta_{d};X_{d})$.

%Making encoder network $\phi$ accurate and generalizable is important for the VAE performance. In practice, the encoder network $\phi$ and generative model parameter $\theta$ are learned jointly during the training process. 
\begin{figure}
\hspace*{-2.5em}\includegraphics[scale=0.21]{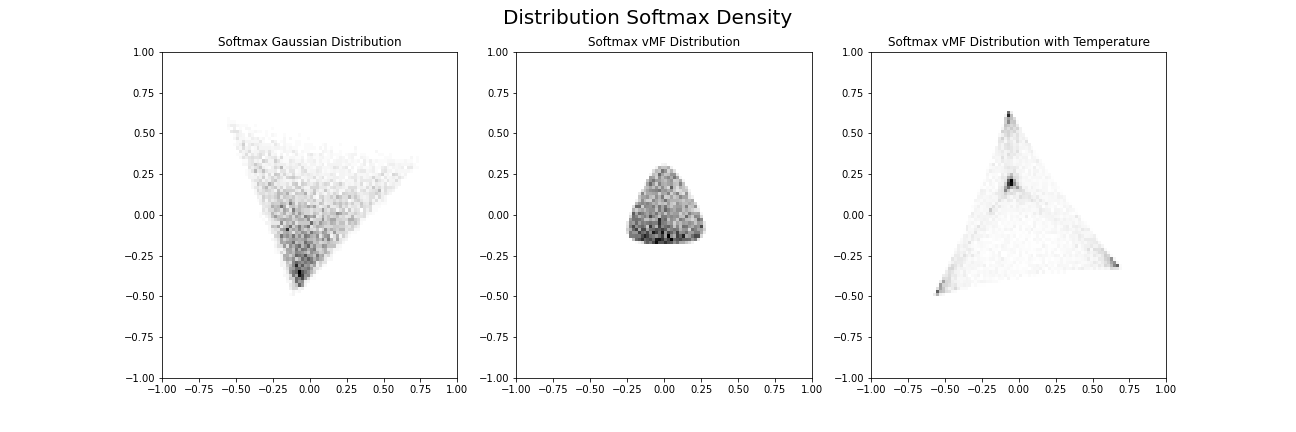}

\caption{2-D PCA projection of empirical CDF $softmax(\eta)$ where from left to right $\eta \sim \mathcal{N}(0, I)$, $\eta \sim vMF(.,0)$ and  $\eta \sim 10 * vMF(.,0)$) respectively. From the heatmap, we observe a white hole in the middle, which denotes the unreachable probability vector from each distribution. Gaussian is mean-centered, while basic vMF tends to cluster around a small rounded triangular area due to its unity constraint. vMF with radius equals to 10 is even more expressive than Gaussian while still retaining edge weights, inducing separability among different topics.} 
\label{fig:1}
\end{figure}
%as another vMF where the direction $\mu$ is the output of encoding networks and $\kappa$ is treated as a constant. vMF forces latent representation on the surface of the unit hypersphere, avoiding entanglement in the center. Another advantage of such setup is that constant $\kappa$ will force the KL divergence $KL(vMF(\mu, \kappa)||vMF(.,0))$ to be a constant, avoiding posterior collapse altogether; 

To overcome entangled topic latent space introduced by Gaussian distribution and limited expressibility of vMF distribution, we make two improvements: 1. Introduce a temperature function $\tau(\eta_{i})$ prior to $softmax()$ to modify the radius of vMF distribution. 2. Set $\kappa$ to a learnable parameter to flexibly infer the confidence of particular topics during training. 
% Last but not least, we prove (Appendix 1) that vMF distribution is ranking inducing compared to Gaussian. 

\textbf{Encoder Network Temperature Function}
To alleviate concerns regarding expressibility while inducing separability among topics, we modify the radius of vMF distribution. We use a temperature function to represent the radius.  As shown in Figure~\ref{fig:1}, unmodified vMF distribution has limited expressiveness. For instance, Gaussian posteriors can express a topic probability vector of [0.98, 0.01, 0.005, 0.0003, 0.0002], while vMF can't due to the unity constraint. In practice, if we change the radius to 10, the network can learn more polarized topics distribution as shown in the right plot in Figure~\ref{fig:1}. The influence of different radii is analyzed in Appendix~\ref{radius}. %In practice, $f(\eta) = tanh(\frac{\pi}{2} \eta)$ achieves good expressiveness since it maps vmf's output range form $[-1, 1]$ to $[-\infty, \infty]$. 
Given equally powerful learning networks of distributions' parameters, vMF with different radii learns richer and more nuanced structures in their latent representations than a Gaussian counterpart (Appendix \ref{appendix:appendix2}). 

\begin{figure}
\hspace*{-2.6 em}\includegraphics[scale = 0.22]{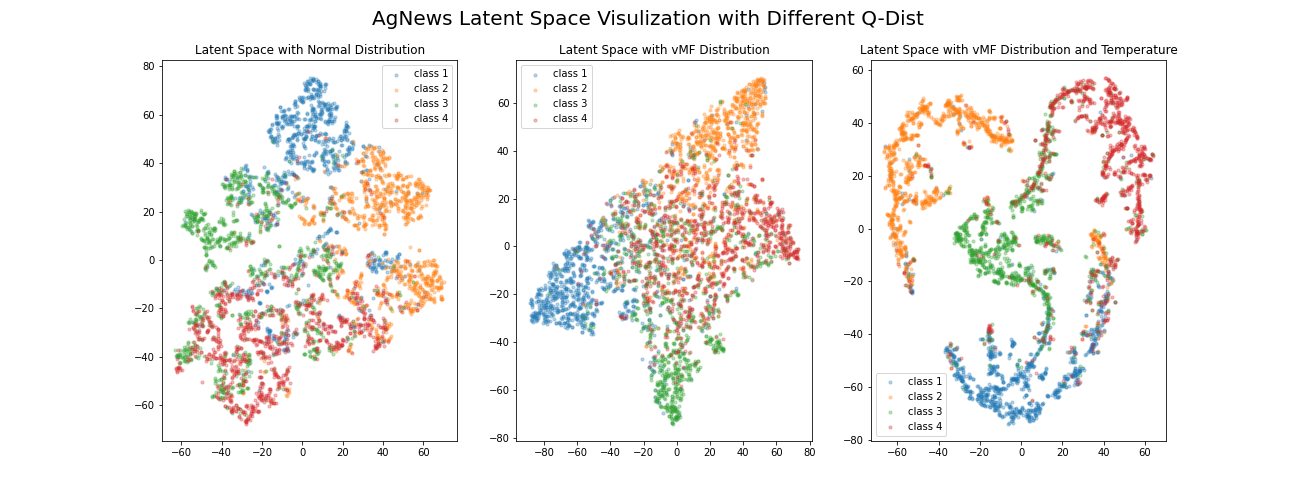}

\caption{2-D TSNE projection of randomly sampled $\eta$ from latent spaces under different posterior-distributions. From left to right are Gaussian, vMF with fixed k, vONT Each color represents a different topic. All encoders are trained on AgNews dataset with the same network structure. }
\label{fig:2}
\end{figure}

\textbf{Learnable $\kappa$} To further improve the clusterability, we convert $\kappa$ from a fixed value to a learnable parameter. The KL divergence of vMF distribution makes the distribution more concentrated while not influencing the direction of latent distribution. This makes the result more clustered. For Gaussian distribution, KL divergence penalizes the polarization of latent distribution (Appendix \ref{theory}). This makes the Gaussian distribution less clustered. 
To illustrate this, we randomly sampled encoded documents' latent distributions from AgNews Dataset~\cite{zhang2016characterlevel} after training with both latent distributions, as shown in Figure~\ref{fig:2}.
% Figure 2 shows the randomly sampled encoded AgNews documents using their trained latent spaces $\eta$. 
For the Gaussian distribution, we see that documents belonging to different topics are entangled around the center, causing the inseparability of topics during both the training and inference stage. vMF distribution, on the hand, repels four document classes into different quadrants, presents more structures when compared to Gaussian distribution, and creates better separable clusters. Detailed ablation study can be found in Appendix~\ref{kappas}
%One possible explanation of such phenomenal is posterior collapse, which happens when the KL term reguarlized $q(z|x)$ for every gradient update and model became trapped in a bad local optimum where $q_{\phi}(z|x)$ = $p_{\theta}(z)$. Posterior collapse tend to happen when strong decoders are used and $z$ has a minor impact on $p_{\theta}(x|z)$. 

\textbf{Decoder Network}
%Following definitions from NTM and ETM introduced in the preliminary section, decoder $\phi$ takes in a sampled topic probability vector and reconstructs bag of words representation of the input document. 
Our decoder follows ETM's construction and uses the embedding $e_{V}$ and $e_{T}$ to generate a topic-word matrix $E$. 
%, using pre-trained word embeddings $e_{V} \in R^{D}$ and trainable topic embedding $e_{T} \in R^{M}$. Final reconstructed bag of words will be $p_{\phi}(x|z) = z * softmax(e_{T} x e_{V})$. 
One distinction between our decoder and ETM's decoder is that we generate the word embeddings by training a spherical embedding on the dataset. Spherical embeddings perform well in word similarity evaluation and document clustering \cite{meng2019spherical}, which further improves the clusterability of the topic modeling methods. 

We also keep word embeddings fixed during the topic modeling training process for two reasons. Firstly, %reconstructing BoW representation heavily relies on the word co-occurence information. It assigns similar embeddings to co-occuring words which naturally groups words into different clusters. 
keeping word embeddings fixed can alleviate sparsity issues \cite{pmlr-v80-zhao18a}. Additionally, vMF based VAE tends to be less expressive in high dimensions due to limited variance freedom \cite{davidson2018hyperspherical}. Keeping the embedding fixed can make topics more separable in higher-dimension settings and improve topic diversity.

\begin{figure}
\hspace*{-0.4em}\includegraphics[scale = 0.224]{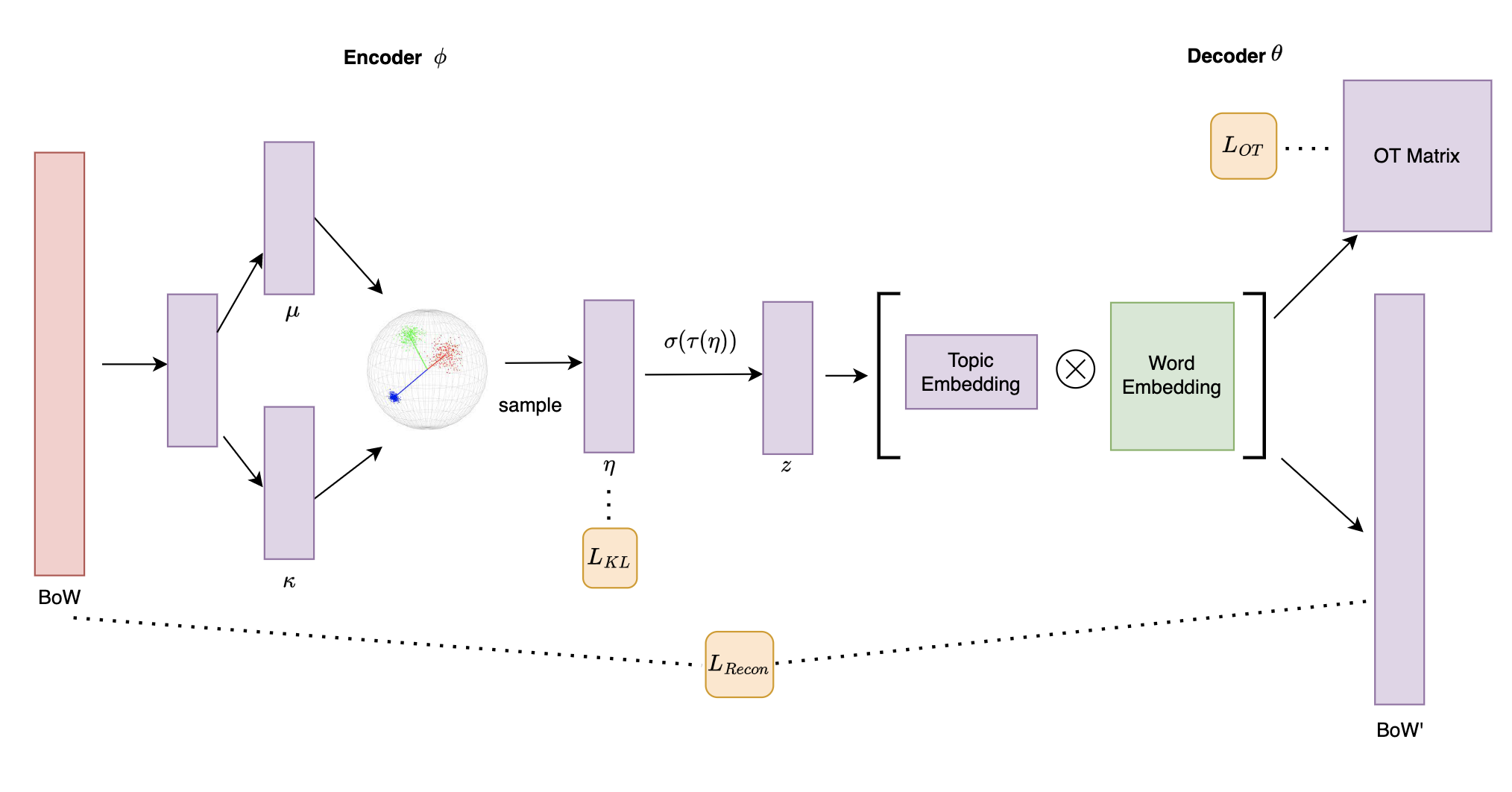}
\caption{Architecture of the model. Purple represents the part of the network that can be trained. $L_{recon}$ represents reconstruction loss. $L_{KL}$ represents KL divergence. $L_{OT}$ represents optimal transport loss}
\label{fig:people4}
\end{figure}

\textbf{Loss Function for vONTSS}
In semi-supervised settings, the user specifies sets of keywords $S$ associated with topics $T$. Let $(s, t)$ represent a keyword set and a topic pair, where each keyword $x \in s$ is labeled by topic $t$. Instead of training a separate neural network for the semi-supervised extension of NTM, we use the topic-word matrix (decoder $\theta$) to represent the probability of a word x given topic t. 
%We name it $q_{\phi}(t|x_{i}) $ where t represent a specific topic and $x_{i}$ represents a word. For a word $x_{i}$ belong to a certain topic t, we want to maximize $p(x_{i}|t)$.

M1 + M2 is a semi-supervised model used in VAE. We adapt the M1 + M2 model framework \cite{kingma2014semi}. 
% if a word x is labeled with a topic t, then the objective to optimize is 
% $$log p_{\theta}(x, t) \geq E_{q_{\phi}}_{(z|x, t)} [log p_{\theta}(x|z)  + log p_{\theta}(t|x) $$
% $$+ log p(z) - log q_{\phi}(z|x,t)]$$ 
Under the assumption that $p_{\theta}(x, t, z) = p_{\theta}(x|z)  p_{\theta}(t|x)  p(z)$, our loss function can be approximated as \begin{equation}L(X, T) = L_{\theta, \phi}(X) - \alpha H[q_{\phi}(X|T)] + \delta L_{ce} \label{eq2} \end{equation} 
\begin{equation}L_{ce} = - \sum_{(s, t) \in (S, T)}E_{x \in s} \log q_{\theta}(x|t) \label{eq3} \end{equation}

% $$= E_{q_{\phi}}(x|t)[E_{q_{\phi}}(z|x)[log p_{\theta}(x|z) + log p(z) - log q_{\phi}(z|x)]] + $$
% $$E_{q_{\phi}}(x|t)[log p_{\theta}(t|x) - log q_{\phi}(x|t)   ]$$
% $$ =E_{q_{\phi}}_{(x|t)}[-L_{\theta, \phi}(x)] + H[q_{\phi}(x|t)] + E_{q_{\phi}}_{(x|t)}[log p_{\theta}(t|x)]  $$
% Step 1 to 2 is because $p_{\theta}(x, t, z) = p_{\theta}(x|z) \times p_{\theta}(z|x) \times p(z)$. 
% Since the second term is entropy and is optimized when x is $q_{\phi}(x|t)$ is uniformlly distributed. Thus, we can approximate our loss function by $$L_{\theta, \phi}(x) - H[q_{\phi}(x|t)] - log p_{\theta}(t|x) $$. 
For topic i and word j, we let $q_{\theta}(x_{j}|t_{i}) = E_{i, j}$ where $E$ is the topic-word matrix. $ H[q_{\phi}(X|T)] $ is entropy of $q_{\theta}(X|T)$. We can consider it as a regularization term. 

Optimizing the current model is hard because we have 3 objectives to minimize(cross-entropy, KL Divergence, and reconstruction loss) and they are not aligned with each other. To validate our point, we find out that if we make radius parameters learnable, the classification metric performs worse even if it decreases the reconstruction loss(Appendix~\ref{appendix:temperature}). If we apply cross-entropy at the beginning, topic embeddings get stuck into the center of selected keywords' embeddings, which makes the model overfitting. If we first train an unsupervised vONT, we need to find a way to match keywords and trained topics. If we match them based on their cosine similarity, different keywords may match to the same topics. This makes performance unstable. 
% Our decoder estimates the $E$ matrix and reconstructs BoW represents of documents. Handling two things makes our method easy to generalize and faster to train. However, it also make our method harder to optimize, since the only trainable parameters in the decoder is $e_{T}$. 
To deal with these challenges, we decide to use a two-stage training process and do not specify labeled keywords to topics at the beginning. vONTSS first optimizes $L_{\theta, \phi}(X) - \alpha H[q_{\phi}(X|T)]$ till convergence, then jointly optimizes $L(X, T)$ for few epochs. This makes our method easier to optimize, less time-consuming, and suitable for interactive topic modeling \cite{hu2014interactive}. To optimize $L_{ce}$ after stage 1, we need to pair topics and keyword sets. Existing methods such as Gumbel softmax prior \cite{jang2016categorical} often lead to instability, while naive matching by $q_{\phi}(x|t)$ may give us redundant topics.

\textbf{Optimal Transport for vONTSS} Optimal Transport (OT) distances \cite{chen2019improving, torres2021survey} have been widely used for comparing the distribution of probabilities. Specifically, let $U(r,c)$ be the set of positive $m \times n$ matrices for which the rows sum to r and the sum of the column to c:
$U(r, c) = \{P \in R_{>0}^{m \times n}|P 1_{t} = r, P^{T} 1_{s} =c \}$. For each position t, s in the matrix, it comes with a cost $C_{t,s}$. Our goal is to solve $d_{C}(r, c) = \min_{P \in U(r, c)} \sum_{t, s} P_{t,s} C_{t,s}$. To make distribution homogeneous \cite{cuturi2013sinkhorn}, we let \begin{equation}d_{C}^{\lambda}(r, c) = \min_{P \in U(r, c)} \sum_{t,s} P_{t,s} C_{t, s} - \frac{1}{\lambda} h(P)\label{eq12} \end{equation}  
\begin{equation}h(P) = - \sum_{t,s} P_{t,s} \log P_{t,s}\label{eq13} \end{equation} OT has achieved good robustness and semantic invariance in NLP related tasks \cite{chen2019improving}. Optimal transport has been used in topic modeling to replace KL divergence~\cite{zhao2020neural,Huynh2020OTLDAAG,Wang2022RepresentingMO}  or create topic embeddings \cite{xu2018distilled} as discussed in Appendix \ref{vMF}. It has not been used for extending topic modeling to semi-supervised cases. \\
To better match topic and keywords set, we approximate $L_{ce}$ using optimal transport. We choose sinkhorn distance since it has an entropy term, which makes our trained topics more coherent and stable. Our goal is to design the loss function that is aligned with derived cross-entropy loss at the global minimum.  To be specific, the raw dimension of our cost matrix is equal to the dimension of topics and the column dimension of the cost matrix equals to the dimension of keywords group. We denote each entry in the M matrix in optimal transport as, \begin{equation}C_{t, s} = - E_{x \in s}\log(q_{\theta}(x|t))\label{eq15}\end{equation} where t is the topic and x is the word in a keywords group s. The model uses sinkhorn distance and restricts the sum of each column and row of $P$ to 1. We give the model an entropy penalty term to make sure each topic is only related to one group of keywords. Thus, \begin{equation} L_{OT} =   \min_{P \in U(|T|, |S|)} \sum_{t,s} P_{t,s} C_{t, s} - \frac{1}{\lambda} h(P) \label{eq4} \end{equation} where $\lambda$ controls the entropy penalty. The first term is similar to $L_{ce}$ approximation, and the second term makes the result homogeneous. To lower the second term, each keyword should be highly correlated to one topic while not/negatively correlated with others. This further separates the topics and improves the topic diversity. We further show that $L_{OT} = L_{ce}$ when $L(X, T)$ is minimized. 
\begin{lemma} When L(X, T) reaches the global minimal. For any $(s, t)$, $(s', t') \in (S, T)$: 
\begin{equation}
\begin{aligned}
    &E_{x \in s} \log q_{\phi}(x|t) +  E_{x \in s'} \log q_{\phi}(x|t')  \\
    &- (E_{x \in s'} \log q_{\phi}(x|t)) + E_{x \in s} \log q_{\phi}(x|t')) >= 0
\end{aligned}
\label{eq20}
\end{equation}

\end{lemma}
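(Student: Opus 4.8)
The plan is to prove \eqref{eq20} by contradiction, exploiting a permutation symmetry of the objective $L(X,T)$ in \eqref{eq2}. First I would abbreviate the four scalar quantities
$$A = E_{x\in s}\log q_{\phi}(x|t),\qquad B = E_{x\in s'}\log q_{\phi}(x|t'),$$
$$C = E_{x\in s'}\log q_{\phi}(x|t),\qquad D = E_{x\in s}\log q_{\phi}(x|t'),$$
so that the claim reads simply $A+B \ge C+D$: the ``diagonal'' pairing of the two keyword sets with their assigned topics yields a total log-likelihood at least as large as the ``swapped'' pairing. Suppose, for contradiction, that at a global minimizer of $L(X,T)$ one had $A+B < C+D$.

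The key device is to interchange the roles of the two topics $t$ and $t'$, that is, to swap rows $t$ and $t'$ of the topic-word matrix $E$ (equivalently the corresponding rows of $e_T$) together with the matching output coordinates of the encoder $\phi$. I would then verify that this swap leaves both $L_{\theta,\phi}(X)$ and the entropy term $H[q_{\phi}(X|T)]$ unchanged. The reconstruction term is built from $Z\times E$ by \eqref{eq11}, so permuting the coordinates of $Z$ and the rows of $E$ simultaneously leaves $p_{\theta}(X|Z)$ fixed; the vMF KL divergence stated earlier depends only on $\kappa$ and $M$ and not on the direction $\mu$, so it is unaffected by permuting latent coordinates; and $H[q_{\phi}(X|T)]$ is a symmetric function of the topic labels, hence permutation-invariant.

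It then remains to track the effect of the swap on $L_{ce}$ in \eqref{eq3}. Holding the pairing $(S,T)$ fixed, after the swap the summand for $(s,t)$ reads off the old row $t'$ and contributes $-D$, while the summand for $(s',t')$ reads off the old row $t$ and contributes $-C$; every other term in \eqref{eq3} is unchanged. Thus $L_{ce}$ changes by $(-C-D)-(-A-B) = (A+B)-(C+D) < 0$ under the contradiction hypothesis, so the swapped configuration has strictly smaller $L_{ce}$ and, since all other terms are unchanged and $\delta>0$, strictly smaller $L(X,T)$. This contradicts global minimality, forcing $A+B \ge C+D$ and establishing \eqref{eq20}.

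I expect the main obstacle to be the careful justification of the invariance claims in the second step: one must confirm that permuting topic labels is a genuine symmetry of the entire encoder--decoder pipeline (encoder coordinates, sampling of $\eta$, the temperature $\tau$, the softmax, and the decoder matrix $E$ all commute with the permutation), and that the swapped parameters form a feasible point of the parameter space, so that it is a legitimate competitor in the minimization that yields the contradiction.
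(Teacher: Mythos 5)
Your proposal is correct and takes essentially the same route as the paper's own proof: a contradiction argument that swaps $t$ and $t'$ simultaneously in the topic-word matrix and in the latent coordinates, observes that the reconstruction, KL, and entropy terms are invariant under this permutation, and concludes that $L_{ce}$ (hence $L(X,T)$) would strictly decrease under the contradiction hypothesis. Your write-up is in fact more careful than the paper's, since you explicitly verify the permutation-invariance of each term in the objective.
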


\begin{theorem}

% \textbf{Theorem 1} 
% for  $s\in S$, 	 $\exists t \in T$ $s.t.$ $ \forall x \in s $ $p(t|x) = 1 $  and for  $t\in T$, 	 $\exists s \in S$ $s.t.$ $ \forall x \in s$ $p(t|x) = 1 $ , 
When L(X, T) reaches the global minimal, $$L_{OT} = L_{ce}  $$
\end{theorem}
Appendix \ref{appendix:proof} contains the proof.

\begin{table*}[ ]
\caption{\label{tab:people20} Clusterability metrics for vONT. The number of topics is 20. The best and second-best scores of each dataset are highlighted in boldface and with an underline, respectively. Figure~\ref{fig:people20} in the appendix shows the variation of the metrics as a function of a number of topics. It is hard to get Km-Purity for ProdLDA. Since it does not perform well for Top-purity, we do not think it will perform well on Km-purity. Thus, we ignore that result.}

\scalebox{0.49} {
\centering
\begin{tabular}{c|cccc|cccc|cccc|}
\hline
\multicolumn{1}{r}{}\vline &
\multicolumn{4}{c}{\textbf{AgNews}}\vline  & \multicolumn{4}{c}{\textbf{20News}} \vline & \multicolumn{4}{c}{\textbf{DBLP}}\vline\\                                                                                                                                                                                                                       

\multicolumn{1}{l}{\textbf{Method}} \vline & \multicolumn{1}{l}{\textbf{Km-Purity}} & \multicolumn{1}{l}{\textbf{Km-NMI}} & \multicolumn{1}{l}{\textbf{Top-Purity}} & \multicolumn{1}{l}{\textbf{Top-NMI}} \vline& \multicolumn{1}{l}{\textbf{Km-Purity}} & \multicolumn{1}{l}{\textbf{Km-NMI}} & \multicolumn{1}{l}{\textbf{Top-Purity}} & \multicolumn{1}{l}{\textbf{Top-NMI}} \vline& \multicolumn{1}{l}{\textbf{Km-Purity}} & \multicolumn{1}{l}{\textbf{Km-NMI}} & \multicolumn{1}{l}{\textbf{Top-Purity}} & \multicolumn{1}{l}{\textbf{Top-NMI}} \vline\\
\hline
\textbf{ETM}                                               & 0.570$\pm$0.023                                                    & 0.160$\pm$0.021                                                 & 0.556$\pm$0.036                                                     & 0.126$\pm$0.024                                                  & 0.689$\pm$0.028                                                    & 0.332$\pm$0.027                                                 & 0.731$\pm$0.037                                                     & 0.369$\pm$0.051                                                  & 0.217$\pm$0.023                                                    & 0.268$\pm$0.022                                                 & 0.208$\pm$0.034                                                     & 0.251$\pm$0.033                                                  \\

\textbf{GSM}                                                & 0.716$\pm$0.016                                                    & 0.313$\pm$0.008                                                 & 0.719$\pm$0.014                                                     & 0.359$\pm$0.021                                                  & 0.709$\pm$0.013                                                    & 0.366$\pm$0.008                                                 & \textbf{0.829$\pm$0.019}                                                     & \textbf{0.470$\pm$0.017}                                                  & 0.272$\pm$0.016                                                    & 0.333$\pm$0.013                                                 & 0.304$\pm$0.028                                                     & 0.358$\pm$0.018                                                  \\

\textbf{NSTM}                                             & 0.728$\pm$0.012                                                    & 0.288$\pm$0.007                                                 & 0.755$\pm$0.026                                                     & 0.304$\pm$0.020                                                  & 0.518$\pm$0.013                                                    & 0.221$\pm$0.013                                                 & 0.670$\pm$0.019                                                     & 0.292$\pm$0.009                                                  & 0.272$\pm$0.010                                                    & 0.322$\pm$0.013                                                 & 0.340$\pm$0.021                                                     & 0.375$\pm$0.016                                                  \\
\textbf{vNVDM}                                             & \underline{0.814$\pm$0.009   }                                                 & \underline{0.372$\pm$0.012    }                                             & \underline{0.810$\pm$0.011}                                                     & \underline{0.397$\pm$0.016 }                                                 & \underline{0.793$\pm$0.014 }                                                   & \underline{0.368$\pm$0.010   }                                              & 0.788$\pm$0.016                                                     & 0.392$\pm$0.010                                                  & \underline{0.389$\pm$0.020  }                                                  & \underline{0.413$\pm$0.014  }                                               & \underline{0.371$\pm$0.024 }                                                    & \underline{0.425$\pm$0.015      }                                            \\
\textbf{prodLDA}                                            &                                                             &                                                          & 0.562$\pm$0.051                                                     & 0.117$\pm$0.065                                                  &                                                             &                                                          & 0.355$\pm$0.105                                                     & 0.105$\pm$0.105                                                  &                                                             &                                                          & 0.074$\pm$0.015                                                     & 0.038$\pm$0.029                                                  \\

\hline
\textbf{vONT}                                             & \textbf{0.822$\pm$0.025 }                                                   & \textbf{0.404$\pm$0.025 }                                                & \textbf{0.810$\pm$0.030}                                                     & \textbf{0.423$\pm$0.036 }                                                 & \textbf{0.819$\pm$0.016   }                                                 & \textbf{0.411$\pm$0.012  }                                               & \underline{0.820$\pm$0.015    }                                                 & \underline{0.433$\pm$0.014   }                                               & \textbf{0.456$\pm$0.031}                                                    & \textbf{0.504$\pm$0.020  }                                               & \textbf{0.443$\pm$0.033  }                                                   & \textbf{0.519$\pm$0.018   }           \\                                   
\hline
\end{tabular}}
\end{table*}

\tabcolsep=0.09cm

\begin{table*}
\centering

\caption{\label{tab:people1}Classification performance and Topic Diversity Result for vONTSS. Number of topics equal to 20. Figure~\ref{fig:people1} provides box plots for the metrics. CatE does not produce topics, so we do not have a diversity score. CarEx has diversity equal to 1 by design.}
\vskip 0.15in
\scalebox{0.63} {
\centering
\begin{tabular}{| c| c c c| c c c| c c c |} 
\hline
\multicolumn{1}{c}{} \vline &
\multicolumn{3}{c}{AgNews}\vline  & \multicolumn{3}{c}{20News} \vline & \multicolumn{3}{c}{DBLP}\vline\\
\multicolumn{1}{c}{Method} \vline &
\multicolumn{1}{c}{accuracy} & \multicolumn{1}{c}{micro F1} & \multicolumn{1}{c}{diversity} \vline
& \multicolumn{1}{c}{accuracy} & \multicolumn{1}{c}{micro F1} & \multicolumn{1}{c}{diversity} \vline
& \multicolumn{1}{c}{accuracy} & \multicolumn{1}{c}{micro F1} & \multicolumn{1}{c}{diversity} \vline\\
\hline
\multicolumn{1}{c}{gONTSS} \vline & 0.793 $\pm$ 0.017 & 0.788 $\pm$ 0.017 & \textbf{0.79 $\pm$ 0.025}  & 0.385 $\pm$ 0.129 & 0.327 $\pm$ 0.121 & \underline{0.859 $\pm$ 0.029} & 0.509 $\pm$ 0.040 & 0.457 $\pm$ 0.077 & \underline{0.804 $\pm$ 0.043} \\

\multicolumn{1}{c}{vONTSS with CE} \vline  & 0.754 $\pm$ 0.009 & 0.717 $\pm$ 0.115 & 0.683 $\pm$ 0.009 &  0.468 $\pm$ 0.137 & 0.417 $\pm$ 0.13  & 0.764 $\pm$ 0.046 &  0.547 $\pm$ 0.009 & 0.493 $\pm$ 0.076 & 0.521 $\pm$ 0.064 \\

\multicolumn{1}{c}{vONTSS with all loss} \vline & 0.741 $\pm$ 0.072 & 0.702 $\pm$ 0.125 & 0.652 $\pm$ 0.051 & 0.473 $\pm$ 0.095 & 0.416 $\pm$ 0.112 & 0.729 $\pm$ 0.050 & \underline{0.590 $\pm$ 0.014} & \underline{0.541 $\pm$ 0.048}  & 0.716 $\pm$ 0.12 \\

\multicolumn{1}{c}{CarEx} \vline & 0.778 $\pm$ 0.003 & 0.778 $\pm$ 0.003 &     & 0.44 $\pm$ 0.039  & 0.443 $\pm$ 0.062 &        & 0.530 $\pm$ 0.009 & 0.491 $\pm$ 0.010 &        \\

\multicolumn{1}{c}{CatE} \vline & \underline{0.820 $\pm$ 0.001} & \textbf{0.822 $\pm$ 0.001} &  & \textbf{0.596 $\pm$ 0.002} & \textbf{0.621 $\pm$ 0.002} & &  0.518 $\pm$ 0.001 & 0.536 $\pm$ 0.001 &               \\

\multicolumn{1}{c}{GuidedLDA} \vline & 0.733 $\pm$ 0.037 & 0.735 $\pm $0.039 & 0.561 $\pm$ 0.036 & 0.554 $\pm$ 0.024 & 0.474 $\pm$ 0.026 & 0.584 $\pm$ 0.021 & 0.493 $\pm$ 0.009 & 0.47 $\pm$ 0.008  & 0.314 $\pm$ 0.025\\

\multicolumn{1}{c}{Best Unsupervised} \vline & 0.799 $\pm$ 0.014 & 0.797 $\pm$ 0.015 & 0.573 $\pm $0.049  & 0.501 $\pm$ 0.047  & 0.429 $\pm$ 0.042 & \textbf{0.952 $\pm$ 0.026} & 0.517 $\pm$ 0.037   & 0.377 $\pm$ 0.031 & 0.781 $\pm$ 0.12\\

\multicolumn{1}{c}{Guided BERTopic} \vline & 0.666 $\pm$ 0.023  & 0.573 $\pm $0.049 & 0.487 $\pm$ 0.041   & \underline{0.591 $\pm$ 0.011}  & 0.407 $\pm$ 0.016 & 0.617 $\pm$ 0.031 & 0.486 $\pm$ 0.112   & 0.301 $\pm$ 0.076 & 0.717 $\pm$ 0.07\\

\hline
\multicolumn{1}{c}{vONTSS} \vline  & \textbf{0.823 $\pm$ 0.003} & \underline{0.821 $\pm$ 0.017} & \underline{0.71 $\pm$ 0.024}  & 0.590 $\pm$ 0.014 & \underline{0.554 $\pm$ 0.013} & \underline{0.92 $\pm$ 0.027} & \textbf{0.606 $\pm$ 0.032} & \textbf{0.576 $\pm$ 0.026} & \textbf{0.871 $\pm$ 0.036} \\
\hline

\end{tabular}}

\end{table*}

\begin{table*}[ ]
\centering
\caption{Coherence metrics for vONT. Number of topics is 20. Figure~\ref{fig:people1} in the appendix shows details of the result.}
\vskip 0.15in
\scalebox{0.9} {
\centering
\begin{tabular}{| c| c c| c c| c c } 
\hline
\multicolumn{1}{c}{} \vline &
\multicolumn{2}{c}{AgNews}\vline  & \multicolumn{2}{c}{R8} \vline & \multicolumn{2}{c}{20News}\vline\\
\multicolumn{1}{c}{Method} \vline &
 \multicolumn{1}{c}{$C_{v}$} & \multicolumn{1}{c}{NPMI} \vline
& \multicolumn{1}{c}{$C_{v}$} & \multicolumn{1}{c}{NPMI} \vline
& \multicolumn{1}{c}{$C_{v}$} & \multicolumn{1}{c}{NPMI}\vline \\
\hline
\multicolumn{1}{c}{GSM} \vline & 0.41 $\pm$ 0.01 & \underline{0.03} $\pm$ 0.01 & 0.61 $\pm$ 0.05 & 0.05 $\pm$ 0.01  &  \underline{0.55} $\pm$ 0.04 & \underline{0.07} $\pm$ 0.03\\
\multicolumn{1}{c}{ETM} \vline & 0.41 $\pm$ 0.04 & 0.02 $\pm$ 0.002    & 0.35 $\pm$ 0.02 & -0.04 $\pm$ 0.01  & 0.51 $\pm$ 0.02 & 0.06 $\pm$ 0.01\\
\multicolumn{1}{c}{vNVDM} \vline  & \underline{0.44} $\pm$ 0.02 & 0.028 $\pm$ 0.008   & \textbf{0.74} $\pm$ 0.02 & \underline{0.08} $\pm$ 0.007 & 0.52 $\pm$ 0.01 & 0.03 $\pm$ 0.01\\
\multicolumn{1}{c}{ProdLDA} \vline  & 0.32 $\pm$ 0.04 & -0.22 $\pm$ 0.04   &  0.59 $\pm$ 0.06 & 0.01 $\pm$ 0.003 &  0.35 $\pm$ 0.02 & -0.18 $\pm$ 0.03\\
\multicolumn{1}{c}{NSTM} \vline &  0.37 $\pm$ 0.02  & -0.04 $\pm$ 0.02   & 0.61 $\pm$ 0.01 & -0.08 $\pm$ 0.007 &  0.38 $\pm$ 0.01 & 0.06 $\pm$ 0.04\\
\hline
\multicolumn{1}{c}{vONT} \vline &  \textbf{0.49} $\pm$ 0.02 & \textbf{0.054} $\pm$ 0.02   &  \underline{0.70} $\pm$ 0.03 & \textbf{0.10} $\pm$ 0.03 &  \textbf{0.69} $\pm$ 0.03 & \textbf{0.16} $\pm$ 0.02\\
\hline
\end{tabular}}

\label{table:2}
\end{table*}

\section{Experiment}

\textbf{Dataset} Our experiments are conducted on four widely-used benchmark datasets for topic modeling and semi-supervised text classification with varied length: \textbf{DBLP} \cite{DBLP:conf/ijcai/PanWZZW16}, \textbf{AgNews} \cite{zhang2016characterlevel} and \textbf{20News} \cite{lang1995newsweeder}. All these datasets have ground truth labels. Average document length varies from 5.4 to 155. We preprocess all the datasets by cleaning and tokenizing texts. We remove stop words, words that appear more than 15 percent of all documents and words that appear less than 20 time. For semi-supervised experiments, we use the same labels in DBLP and AgNews. We sample 4 similar classes from 20News to see how our method performs in datasets with similar labels. For unsupervised settings, we keep the number of topics equal to the number of classes plus one. I keep the unit of the length to 10 for all experiments. For semi-supervised settings, we set the number of topics equal to the number of classes in semi-supervised cases, and we provide 3 keywords for each class. We use 20\% as the training set to get our keywords with the top tfidf score for each class. We use 80\% data as the test set. Additional details and provided keywords on the dataset are available in Appendix~\ref{appendix:dataset}

\textbf{Settings} In our experiment setting, we do not utilize any external information beyond the dataset itself. The embedding is trained on the test set. We do not compare methods that rely on transfer learning or language models such as \cite{Bianchi2021PretrainingIA, Yu2021FineTuningPL,Wang2021XClassTC} because of reasons mentioned in appendix~\ref{language}.
The hyperparameter setting used for all baseline models and vONT is similar to \cite{JMLR:v20:18-569}. We use a fully-connected neural network with two hidden layers of [256, 64] unit and ReLU as the activation function followed by a dropout layer (rate = 0.5). We use Adam~\cite{kingma2017adam} as the optimizer with learning rate 0.002 and use batch size 256. We use \cite{smith2018superconvergence} as scheduler and use learning rate 0.01 for maximally iterations equal to 50. We use spherical embeddings~\cite{meng2019spherical} trained on the dataset for NVTM, ETM, GSM and NSTM. For vONT, we set the radius of vMF distribution equal to 10. We fix $\alpha = \delta = 1$ in $L(X,T)$ . We keep $\lambda = 0.01$ in $L_{OT}$.    
Our code is written in PyTorch and all the models are trained on AWS using ml.p2.8xlarge (NVIDIA K80).\footnote{Details on codebases used for baselines and fine-tuning are provided in Appendix  \ref{appendix:code}}
\begin{figure}
\centering
\includegraphics[scale = 0.33]{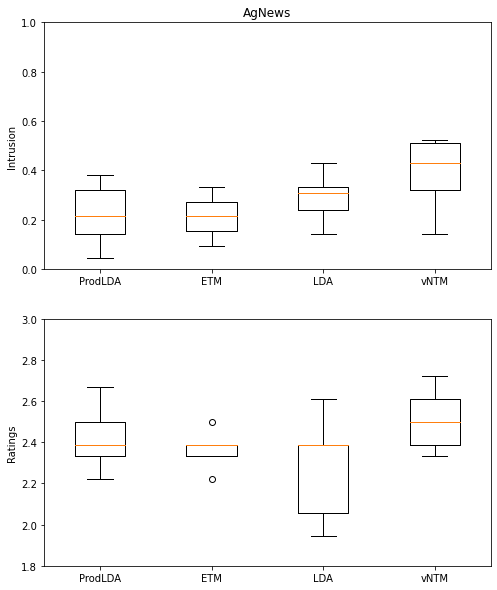}
\caption{Comparison of intrusion and rating task performance on AgNews.}
\label{score}
\end{figure}

\subsection{Unsupervised vONT experiments}
\textbf{Evaluation Metrics}
%We measure the model quality using topic coherence(\textbf{TC}) and clusterability using topic purity. 
We measure the topic  coherence and diversity of the model. Most of unsupervised topic coherence metrics are inconsistent with human judgment, based on a recent study~\cite{hoyle2021automated}. Thus, we have done a  qualitative study where we ask crowdsource to perform rating and intrusion task on 4 models trained on AgNews. In rating task\cite{aletras2013evaluating, newman2010automatic, mimno2011optimizing}, raters see a topic and then give the topic a quality score on a three-point scale. The rating score is between 1 and 3. A rating score close to 3 means that users can see a topic from provided words. Chang\cite{chang2009reading} devise the intrusion task, where each topic is represented as its top words plus one intruder word which has a low probability belonging to that topic. Topic coherence is then judged by how well human annotators detect the intruder word. The intrusion score is between 0 and 1. An intrusion score close to 1 means that users can easily identify the intruder word. We use mechanical turk and sagemaker groundtruth to do the labeling work.  To measure clusterability, we assign every document the topic with the highest probability as the clustering label and compute \textbf{Top-Purity} and Normalized Mutual Information(\textbf{Top-NMI}) as metrics\cite{nguyen2018improving} to evaluate alignment. Both of them range from 0 to 1. A higher score reflects better clustering performance. We further apply the KMeans algorithm to topic proportions z and use the clustered documents to report purity(\textbf{Km-Purity}) and NMI \textbf{Km-NMI} \cite{zhao2020neural}. We varied the number of topics from 10 to 50. We set the number of clusters to be the number of topics for KMeans algorithm. Models with higher clusterability are more likely to perform well in semi-supervised extension. Furthermore, we run all these metrics 10 times. We report mean and standard deviation.  Detailed metric implementations are in Appendix \ref{appendix:metric}. We also analyze topic diversity in \ref{diveristy} and unsupervised topic coherence in \ref{coherence}. For Diversity,

\textbf{Baseline Methods}
We compare with the state-of-the-art NTM methods that do not rely on a large neural networks to train. These methods include: \textbf{GSM}~\cite{miao2018discovering}, an NTM replaces the Dirichlet-Multinomial parameterization in LDA with Gaussian Softmax; \textbf{ProdLDA}~\cite{srivastava2017autoencoding}, an NTM model which keeps the Dirichlet Multinomial parameterization with a Laplace approximation; \textbf{ETM}~\cite{dieng2020topic}, an NTM model which incorporates word embedding to model topics; \textbf{vNVDM}~\cite{xu2018spherical}, a vMF based NTM as mentioned in section 2. \textbf{NSTM}~\cite{zhao2020neural}, optimal transport based NTM, as mentioned in section 3. All baselines are implemented carefully with the guidance of their official code.\footnote{Some methods we tested had lower TC scores compared to other benchmarks. This may be because we have less complicated layers, small epochs to train, and we keep fewer words. The ranking of these metrics is mostly in alignment with the paper that has a benchmark. We exclude methods that need to rely on large neural networks and a lot of finetune such as \cite{duan2021sawtooth,duan2021topicnet}. We also exclude methods similar to existing methods such as \cite{Wang2022RepresentingMO}. We exclude methods that do not perform well in previous papers' experiments \cite{duan2021sawtooth} such as \cite{JMLR:v20:18-569}. We also exclude methods that are relevant but work on different use cases, such as short text.\cite{wu-etal-2020-short}} 
For qualitative study, we choose \textbf{ProdLDA}, \textbf{ETM} and \textbf{LDA} as a comparison to align with previous study~\cite{hoyle2021automated}.

\textbf{Results} i) In Table \ref{tab:people20}, vONT performs significantly better than other methods in all datasets for cluster quality metrics. This means vMF distribution induces good clusterability.   ii) vONT has the lowest variance in clusterability-related metrics. (iii) In Appendix \ref{coherence}, vONT outperforms other models in TC metrics $C_v$ and NPMI. This means that our model is coherent. We believe the introduction of the temperature function helps our method perform better than the existed method in coherence. iv) In Appendix \ref{diveristy}, vONT performs well on diversity and has the lowest variance.

\textbf{Human Evaluation} To evaluate human interpretability, we use intrusion test and ratings test. Details of the experiment are provided in Appendix~\ref{humanEval:sec}. We select AgNews as our dataset, we generate 10 topics each from 4 models. In the word intrusion task, we sample five of the ten topic words plus one intruder randomly sampled from the dataset; for the rating task, we present the top ten words in order. Figure~\ref{score} summarizes the results.

vONT performs significantly better than ProdLDA, ETM, and LDA qualitatively. In \textit{intrusion test}, vONT has the highest score 0.4. The second-best method is LDA, which has score 0.29. The two sample test between the two methods has the p-value equal to 0.014.  In \textit{rating test}, vONT has the highest score 2.51 while ProdLDA has the second-highest score 2.42.  The two sample test between the two methods has a p-value equal to 0.036. \textit{Based on this study, we conclude that humans find it easier to interpret topics produced by vONT.}

\subsection{Semi-Supervised vONTSS experiments}
\textbf{Evaluation Metric}
\textbf{diversity} aims to measure how diverse the discovered topic is. \textbf{diversity} is defined as the percentage of unique words in the top 25 words from all topics.\cite{dieng2020topic} \textbf{diversity} close to 0 means redundant and TD close to 1 means varied topics.  We measure the classification accuracy of the model. Thus, we measure \textbf{accuracy}. Similar to other semi-supervised paper\cite{meng2018weakly}, we also measure \textbf{micro f1} score, since this metric gives more information in semi-supervised cases with unbalanced data. We do not include any coherence metric since we already have ground truth.

\textbf{Baseline methods}
\textbf{CatE} \cite{Meng2020DiscriminativeTM}  retrieves category representative terms according to both embedding similarity and distributional specificity. It uses WeSTClass\cite{Meng_2018} for all other steps in weakly-supervised classification. If we do not consider methods with transfer learning or external knowledge, it achieves the best classification performance. \textbf{GuidedLDA} \cite{jagarlamudi-etal-2012-incorporating}: incorporates keywords by combining the topics as a mixture of a seed topic and associating each group of keywords with a multinomial distribution over the regular topics. Correlation Explanation \textbf{CorEx} \cite{gallagher2018anchored} is an information theoretic approach to learning latent topics over documents by searching for topics that are ”maximally informative” about a set of documents. We fine-tune on the training set and choose the best anchor strength parameters for our reporting. We also created semi-supervised ETM by using gaussian distribution and adding the same optimal transport loss as vONTSS. We call it \textbf{gONTSS}. We also train all objectives instead of using two-stage training and call it \textbf{vONTSS with all loss}. Instead of applying optimal transport, we apply cross entropy directly after stage 1 and match topics by keywords set with the highest similarity. We call this method \textbf{vONTSS with CE}. 
To get \textbf{Best Unsupervised} method, we train the unsupervised models(ETM, vNVDM, vONT, ProdLDA) and consider all potential matching between topics and seed words. We report the method with the highest accuracy for each dataset across all different matching. \textbf{Guided BERTopic} We evaluate the guided version of BERTopic \cite{https://doi.org/10.48550/arxiv.2203.05794} method. They create seeded embeddings to find the most similar document. It then takes seed words and assigns them a multiplier larger than 1 to increase the IDF value.  \footnote{We do not find code for other neural-based semi-supervised topic modeling methods~\cite{gemp2019weakly,wang2021neural,Harandizadeh_2022}, but based on their experiments, the best one is \cite{Harandizadeh_2022} which is almost the same as vONTSS with CE which means it has similar variance and lower performance compare to vONTSS with CE  }

\textbf{Results} Table \ref{tab:people1} shows that i) vONTSS outperforms all other semi-supervised topic modeling methods in classification accuracy and micro F1 score, especially for large datasets with lengthy texts such as AgNews. ii) vONTSS has a lower standard deviation compared to other models. This advantage makes our model more stable and practical in real-world applications. iii) To compare methods with/without optimal transport, methods with optimal transport vONTSS achieve much better accuracy, diversity, and lower variance compared to vONTSS with CE and vONTSS with all loss. This means optimal transport does increase the classification accuracy, stability, and diversity of generated topics. iv) In benchmark datasets, vONTSS is comparable to CatE in quality metrics. As can be seen in Table~\ref{speed:tab} in the appendix, vONTSS is 15 times faster than CatE. v) Unsupervised methods cannot produce comparable results even if we use the best topic seed word matching. This shows that semi-supervised topic modeling methods are necessary. vi) Guided Bertopic does not produce good results. It is also not very stable. In Guided Bertopic, the assigned multiplier is increased across all topics, which makes their probability less representative. vi) If we change vONTSS to gONTSS, 

\section{Conclusions}
In this paper, we propose a new semi-supervised neural topic modeling method vONTSS, which leverages vMF, the temperature function, optimal transport, and VAEs. Its unsupervised version exceeds state-of-the-art in topic coherence through both unsupervised and human evaluations while inducing high clusterability among topics. We show that optimal transport loss is equivalent to cross-entropy loss under the optimal condition and induces one-to-one mapping between keywords sets and topics. vONTSS achieves competitive classification performance, maintains top topic diversity, trains fast, and possesses the least variance among diverse datasets. 

% To make our model perform better in higher dimension, we can explore other prior distributions such as Bivariate von Mises distribution, Kent distribution, etc. We believe that rich properties of different distributions will transfer to VAEs while regularizing the decoder in a more principled way. We can also enhance the power of vONTSS and further increase the performance by incorporating methods like knowledge distillation and negative sampling.
\newpage

\newpage
\nocite{langley00}

\bibliography{main}
\bibliographystyle{acl_natbib}

%%%%%%%%%%%%%%%%%%%%%%%%%%%%%%%%%%%%%%%%%%%%%%%%%%%%%%%%%%%%%%%%%%%%%%%%%%%%%%%
%%%%%%%%%%%%%%%%%%%%%%%%%%%%%%%%%%%%%%%%%%%%%%%%%%%%%%%%%%%%%%%%%%%%%%%%%%%%%%%
% APPENDIX
%%%%%%%%%%%%%%%%%%%%%%%%%%%%%%%%%%%%%%%%%%%%%%%%%%%%%%%%%%%%%%%%%%%%%%%%%%%%%%%
%%%%%%%%%%%%%%%%%%%%%%%%%%%%%%%%%%%%%%%%%%%%%%%%%%%%%%%%%%%%%%%%%%%%%%%%%%%%%%%
\newpage
\appendix
\onecolumn
\textbf{\Large Appendix} \vspace*{1em} \\

\section{Additional Experimental Results}
Figure~\ref{fig:people20} shows the variation of cluster purity as the number of topics changes. This expands the information provided in Figure~\ref{tab:people20}.

Figure~\ref{fig:people1} provides box plots for the metrics in Table~\ref{tab:people1}.

\begin{figure*}
\includegraphics[scale = 0.31]{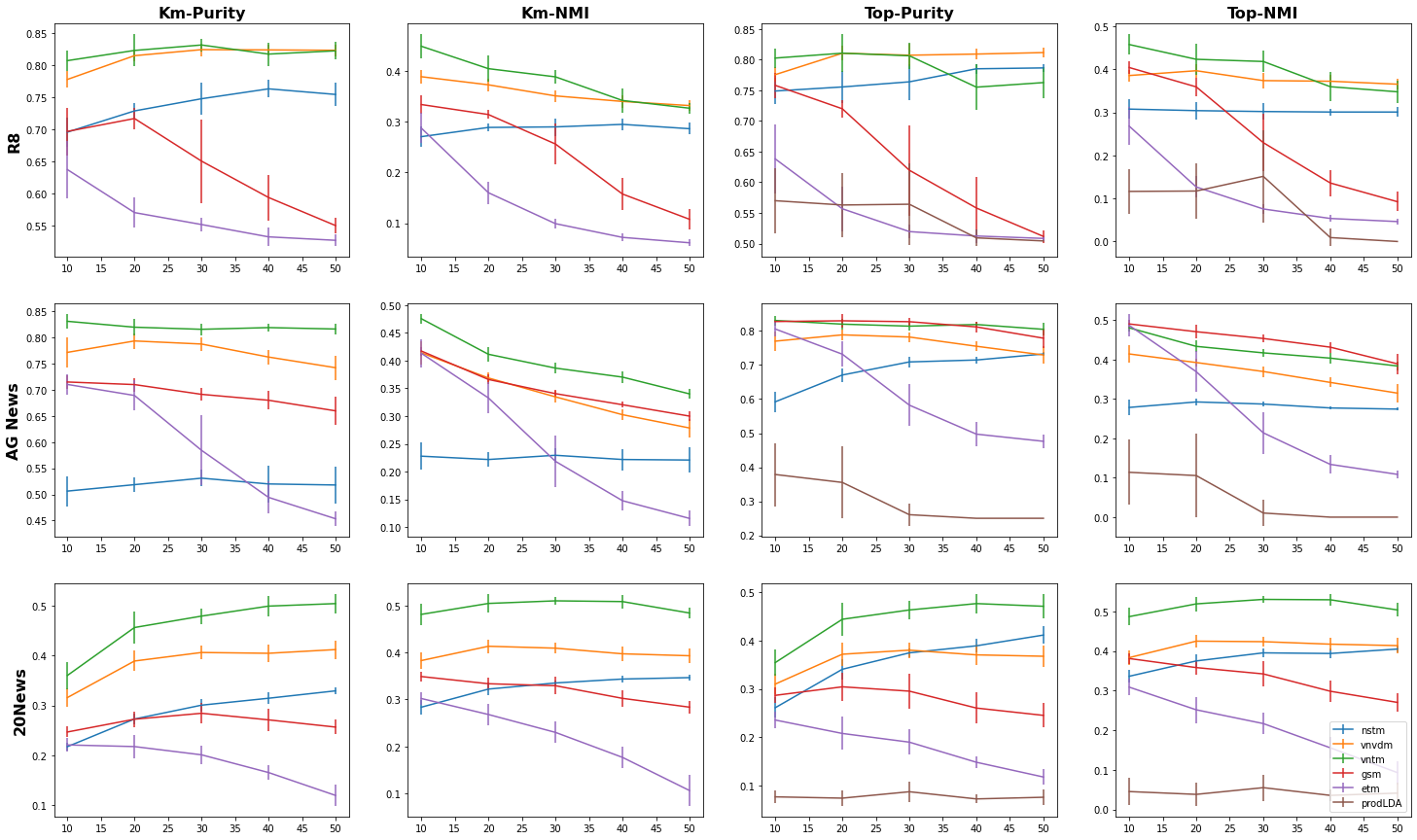}
\centering
\caption{Each column represents a metric and each row represents a dataset. The error bar represents the standard deviation that is created by running the same model for 10 times with different random seeds.}
\label{fig:people20}
\end{figure*}

\begin{figure*}
\includegraphics[scale = 0.38]{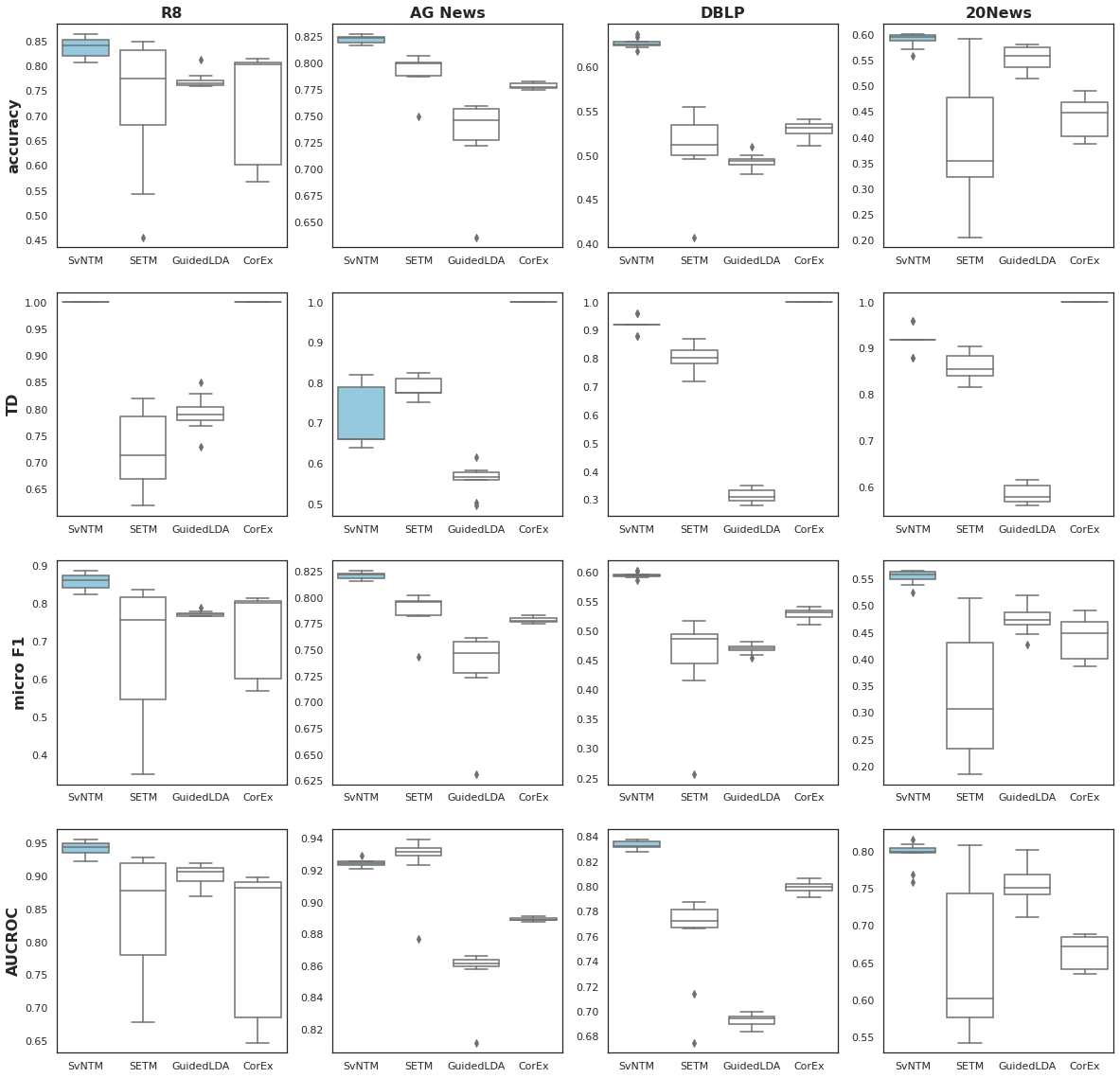}
\caption{Each row represents a metric and each column represent a dataset. The boxplot is created by running the same model for 10 times with different random seeds. Mean and variance values are presented in the boxplot. vONT is the left most. We mark its performance in skyblue. }
\label{fig:people1}
\end{figure*}

\section{Proof of Lemma 3.1}
\label{appendix:proof}
\begin{lemma} When L(X, T) reaches the global minimum. For any $(s, t)$, $(s', t') \in (S, T)$: 
\begin{equation}
\begin{aligned}
    &E_{x \in s} \log q_{\phi}(x|t) +  E_{x \in s'} \log q_{\phi}(x|t')  \\
    &- (E_{x \in s'} \log q_{\phi}(x|t)) + E_{x \in s} \log q_{\phi}(x|t')) >= 0
\end{aligned}
\label{eq21}
\end{equation}

\end{lemma}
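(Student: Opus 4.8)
The plan is to prove the inequality by an exchange (two-opt) argument that exploits the permutation symmetry of the unsupervised part of the objective. Write $a_{s,t} := E_{x \in s}\log q_{\theta}(x\mid t) = E_{x\in s}\log E_{t,x}$ for the log-affinity between a keyword set $s$ and a topic $t$, so that the supervised term is $L_{ce} = -\sum_{(s,t)\in(S,T)} a_{s,t}$ taken over the assigned pairs. In this notation the claim (\ref{eq20}) is exactly the rearrangement inequality $a_{s,t}+a_{s',t'} \ge a_{s,t'}+a_{s',t}$, i.e. that the assigned ``diagonal'' pairing of $\{s,s'\}$ to $\{t,t'\}$ carries no less total log-affinity than the swapped pairing. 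I would therefore show that if this failed for some two pairs, then relabeling the two topics would decrease $L(X,T)$, contradicting global minimality.

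The key structural observation is that $L_{\theta,\phi}(X)$ and $H[q_{\phi}(X\mid T)]$ are invariant under any permutation of the topic labels. Reconstruction uses $p_{\theta}(X\mid Z)=Z\times E$, so permuting the rows of $E$ together with the corresponding coordinates of every encoded $Z$ leaves the product, and hence $L_{\theta,\phi}(X)$, unchanged. Likewise $H[q_{\phi}(X\mid T)]$ is a sum of per-topic entropies of the rows of $E$, so any topic permutation merely reorders the summands and preserves its value. Thus, starting from a global minimizer, I would construct the competitor configuration obtained by swapping rows $t$ and $t'$ of $E$ (and the matching latent coordinates); this competitor is a valid point of the parameter space with identical $L_{\theta,\phi}(X)$ and identical $H[q_{\phi}(X\mid T)]$.

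Under this swap only the two supervised terms touching $t$ and $t'$ change: since the assignment still pairs $s$ with the label $t$ and $s'$ with the label $t'$, while those topics now carry the opposite word distributions, the contribution $-(a_{s,t}+a_{s',t'})$ is replaced by $-(a_{s,t'}+a_{s',t})$, and every other pair is untouched. Because $\delta>0$ and the competitor differs from the minimizer only through $\delta L_{ce}$, global minimality forces $-(a_{s,t}+a_{s',t'}) \le -(a_{s,t'}+a_{s',t})$, which rearranges to $a_{s,t}+a_{s',t'}-a_{s,t'}-a_{s',t}\ge 0$, the desired inequality.

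The step I expect to be the main obstacle is making the invariance and realizability of the swapped configuration fully rigorous: one must verify that the permuted encoder/decoder pair genuinely lies in the admissible parameter family and reproduces the same reconstruction and entropy exactly, so that the comparison isolates $\delta L_{ce}$ and no hidden coupling between the matching and the unsupervised terms is introduced. Once the exchangeability of topics is formalized at the level of the architecture, the inequality follows immediately from optimality of the minimizer; the remaining care is only the bookkeeping of signs in $L_{ce}$ and confirming that $\delta>0$ so that a decrease in $L_{ce}$ indeed decreases $L(X,T)$.
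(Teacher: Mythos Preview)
Your proposal is correct and follows essentially the same exchange/contradiction argument as the paper: swap the roles of topics $t$ and $t'$ in the topic-word matrix and the latent coordinates, observe that the unsupervised terms are unaffected, and conclude from global minimality that the diagonal pairing cannot be worse than the swapped one. Your treatment is in fact slightly more careful than the paper's, since you explicitly note the invariance of both $L_{\theta,\phi}(X)$ and $H[q_{\phi}(X\mid T)]$ under topic permutation and flag the realizability of the swapped parameters, whereas the paper only mentions that the reconstruction loss is unchanged.
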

\begin{proof}
If the reverse is true, then, we can just switch position of topic t and $t'$ in the topic-word matrix  and also switch the position on latent space z using temperature function. This will not change reconstruction process, since for every input, get the same reconstruction. Thus, reconstruction loss does not change. Assume this new neural network structure has loss $L^{'}(X, T)$ and cross entropy loss is $L^{'}_{ce}$
\begin{dmath}
L^{'}(X, T) - L(X, T)
 = L^{'}_{ce} - L_{ce} 
 = - (E_{x \in s'} \log q_{\phi}(x|t)) + E_{x \in s} \log q_{\phi}(x|t')) 
 + E_{x \in s} \log q_{\phi}(x|t) +  E_{x \in s'} \log q_{\phi}(x|t')   
 < 0  \\
\end{dmath}The last step is based on (9). This contradicts that $L(X, T)$ is global minimal.  Thus, lemma holds.
\end{proof}

\section{Proof of Theorem 3.2}
\begin{theorem}

% \textbf{Theorem 1} 
% for  $s\in S$, 	 $\exists t \in T$ $s.t.$ $ \forall x \in s $ $p(t|x) = 1 $  and for  $t\in T$, 	 $\exists s \in S$ $s.t.$ $ \forall x \in s$ $p(t|x) = 1 $ , 
When L(X, T) reaches the global minimal, $$L_{OT} = L_{ce}  $$
\end{theorem}

\begin{proof}
% \textbf{Step 1}
% %Since we assign weight 1 to each row and column of optimal transport. 
\textbf{Step 1} show that $p_{t, s} = 1$ when $(t, s) \in (T, S)$ and equal to 0 in all other cases.\\ 
$\exists p_{t, s} = \gamma < 1$ when $(t, s) \in (T, S)$. Without loss of generality, we assume  $p_{t, s'} = 1 - \gamma$, $p_{t', s'} = \gamma$ and $p_{t', s} = 1 - \gamma$. Consider related term in $L_{OT}$, for the first term: 
\begin{align*}
&\gamma (C_{t,s} + C_{t', s'}) + (1 - \gamma) (C_{t,s'} + C_{t', s}) \\
&= (C_{t,s} + C_{t', s'}) \\
&- (1 - \gamma) (C_{t,s} + C_{t', s'} - (C_{t,s'} + C_{t', s})) \\
&\geq C_{t,s} + C_{t', s'} \text{ using Lemma 3.1 and Equation (\ref{eq15})}
\end{align*}
  For the second term in $L_{OT}$, $-p_{t, s} \log p_{t, s}= 0$ when $p_{t, s} = 1$ or 0. Otherwise, it is larger than 0. This means that $p_{t, s} = p_{t', s'} = 1$ achieve smaller $L_{OT}$ compare to current settings. This contradicts the definition of $L_{OT}$ which is the min in the space.  Thus, $p_{t, s} = 1$ when $(t, s) \in (T, S)$. Since the raw sum and column sum equal to $|T|$. This means $p_{t, s} = 0$ when $(t, s) \notin (T, S)$

\textbf{Step 2}: 
\begin{dmath}h(P) = - \sum_{t,s} P_{t,s} \log P_{t,s} \\
 = - (\sum_{(t, s) \in (T,S)} 1 * \log 1 + \sum_{(t, s) \notin (T,S)} 0 * \log 0)  = 0 \end{dmath}

\begin{dmath}\sum_{t,s} P_{t,s} C_{t, s} = \sum_{(t, s) \in (T, S)} C_{t, s} 
=- \sum_{(t, s) \in (T, S)} E_{x\in s} \log q_{\phi}(x|t(x)) \end{dmath}
Combine (10) and (11), we have \begin{dmath}L_{OT} = \sum_{(t,s)\in(S, T)}  C_{t, s} - h(P)= - \sum_{(t,s) \in (T, S)} E_{x \in s} \log q_{\phi}(x|t_{x})  
= L_{ce} \end{dmath}

\end{proof}

\section{Effect of learn-able distribution temperature}
\label{appendix:temperature}
In this study, we make it a learnable parameter and implement it in two ways. The first way is setting temperature variable as one parameter that can be learned (1-p model). All topics share the same parameter. The second way is setting the temperature variable as a vector with dimension equal to the number of topics (n-p model). This means each topic has its own temperature. The initialization value for both the vectors is 10.

After training, the 1-p model has value 4.99 and n-p model has values [-0.45,4.88,5.91,3.47,4.19] (values are rounded to 2 decimals). The accuracy for 1-p model is 78.9 and n-p model is 80.5. This means that vONTSS cannot further improve with learnable temperature. This means that our loss function is not fully aligned with accuracy metric. This is due to the fact that we optimize reconstruction loss as well as KL divergence during the training procedure. This makes our objective less aligned with cross entropy loss.

\section{Code}
\label{appendix:code}
Code we used to implement GSM is \url{https://github.com/YongfeiYan/Neural-Document-Modeling}
Code we used to implement ETM is \url{https://github.com/adjidieng/ETM}
Code we used to implement vNVDM is \url{https://github.com/jiacheng-xu/vmf_vae_nlp} with kl weight = 1 and default scaling item for auxiliary objective term equal to 0.0001
Code we used to implement NSTM is \url{https://github.com/ethanhezhao/NeuralSinkhornTopicModel}
We use same parameters suggested by paper for optimal transport reclossweight = 0.07 and epsilon = 0.001. 
Code we used to implement ProdLDA is \url{https://github.com/vlukiyanov/pt-avitm}
Code we used to implement GSM is \url{https://github.com/YongfeiYan/Neural-Document-Modeling} with  topic covariance penalty equals to 1.
Code we used to implement GuidedLDA is \url{https://github.com/vi3k6i5/GuidedLDA} We fine tune best seed confidence from 0 to 1 with step equal to 0.05. We simply report the best performance on average of 10 results.
Code we used to implement CorEx is \url{https://github.com/gregversteeg/corex_topic}
CorEx are fine-tuned by anchor strength from 1 to 7 with step equal to 1. We simply report the best performance on average of 10 results.
Code we used to implement Spherical Embeddings is \url{https://github.com/yumeng5/Spherical-Text-Embedding}. We set word dimension equals 100, window size equals 10, minimum word count equals 20 and number of threads to be run in parallel equals to 20.The pretrained embedding of all datasets is at the attached data file.
Code we used to implement LDA is \url{https://scikit-learn.org/stable/modules/generated/sklearn.discriminant_analysis.LinearDiscriminantAnalysis.html} with solver = SVD and tol = 0.00001 

Code we used to implement CatE is \url{https://github.com/yumeng5/WeSTClass} and \url{https://github.com/yumeng5/CatE} with number of terms per topic = 10 and text embeddings dimension = 50.  

\section{Coherence}

\label{coherence}

Topic coherence \textbf{TC} metric \cite{mimno2011optimizing} is used to check if topic will include words that tend to co-occur in the same documents. TC \cite{lau2014machine} is the average point wise mutual information (NPMI) of two words drawn randomly from the same documents.  We use both NPMI and $C_{v}$\cite{10.1145/2684822.2685324} by using top 10 words from each topic as suggested in \cite{10.1145/2684822.2685324}.

\section{Diversity Metric}
\label{appendix:metric}
\textbf{diversity} is implemented using scripts: \url{https://github.com/adjidieng/ETM/blob/master/utils.py} line 4. \textbf{$C_{v}$} is implemented using  gensim.models.coherencemodel where coherence = '$C_v$', \textbf{NPMI} is implemented using gensim.models.coherencemodel where coherence = '$c_npmi$'. \textbf{Top-NMI} is implemented using metrics.$normalized_mutual_info_score$ from sklearn. \textbf{Top-Purity} is implemented by definitions. \textbf{km} based is implemented by sklearn package kmeans. 

\section{Datasets}
\label{appendix:dataset}
We store the datasets and related embeddings in the attached data file. 
Overall, we use 4 datasets from different domain to evaluate the performance of our 2 methods.\\ (1) \textbf{AgNews} We use the same AG’s News dataset from \cite{zhang2016characterlevel}.Overall it has 4 classes and, 30000 documents per class. Classes categories include World, Sports, Business, and Sci/Tech. 
for evaluation; 
Keywords we use: 
group1: government,military,war;
group2:basketball,football,athletes;
group3:stocks,markets,industries;
group4:computer,telescope,software
\\
(2) \textbf{R8} is a subset of the Reuters 21578 dataset, which consists of 7674 documents from 8 different reviews groups. We use class acq, earn, and we group all other data in one class. 
Keywords we use:
group1:['acquir', 'acquisit', 'stake'],
group2:['avg', 'mth', 'earn'],
group3:['japan', 'offici', 'export']]

(3) \textbf{20News} \cite{lang1995newsweeder} is a collection of newsgroup posts. We only select 4 categories here. Compare to previous 2 datasets, 4 categories newsgroup is small so that we can check the performance of our methods on small datasets. Keywords we use: group1: faith,accept,world;
group2:evidence,religion,belief;
group3:algorithm,information,problem;
group4:earth,solar,satellite

(4) \textbf{DBLP} \cite{DBLP:conf/ijcai/PanWZZW16} dataset consists of bibliography data in computer science. DBLP selects a list of conferences from 4 research areas, database (SIGMOD, ICDE, VLDB, EDBT, PODS, ICDT, DASFAA, SSDBM, CIKM), data mining (KDD, ICDM, SDM, PKDD, PAKDD), artificial intelligent (IJCAI, AAAI, NIPS, ICML, ECML, ACML, IJCNN, UAI, ECAI,COLT, ACL, KR), and computer vision (CVPR, ICCV, ECCV, ACCV, MM, ICPR, ICIP, ICME). With a total 60,744 papers averaging 5.4 words in each title, DBLP tests the performance on small text corpus. keywords we have: group1: 'system', 'database','query'; group2: 'density', 'nonparametric', 'kernel';
group3: 'image', 'neural', 'recognition'; group4: 'partition', 'group', 'cluster'
\begin{figure*}[h]
\centering
\includegraphics[scale = 0.3]{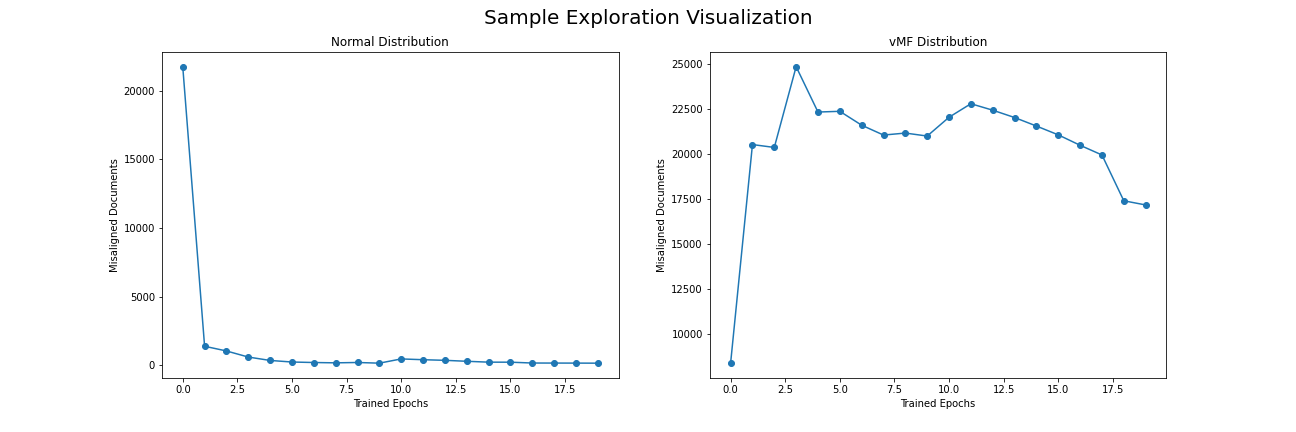}

\caption{Sample Exploration}
\label{fig:22}
\end{figure*}

\begin{figure*}[h]
\centering
\includegraphics[scale = 0.3]{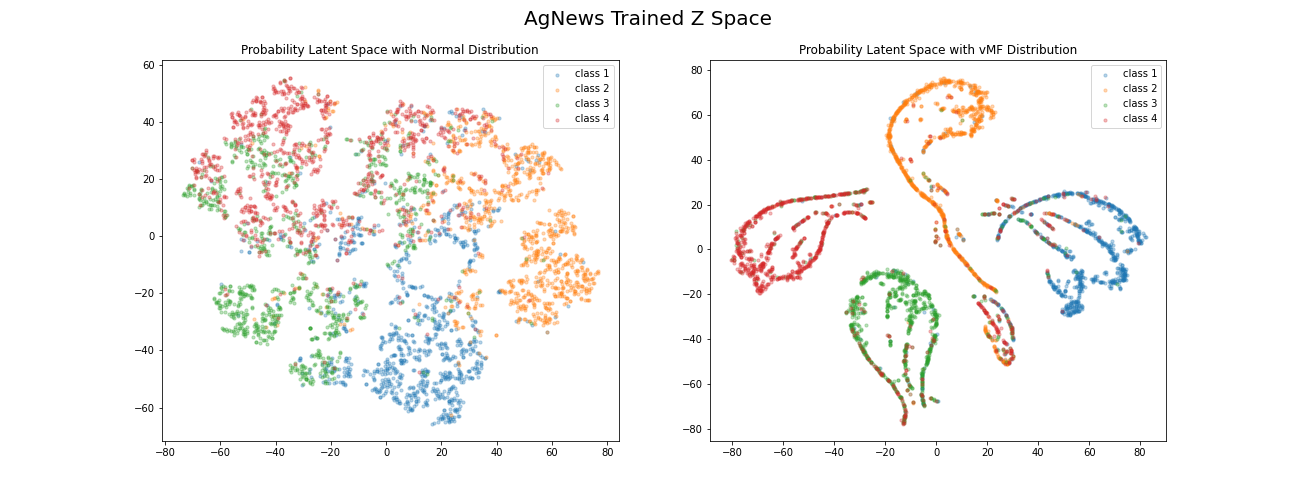}

\caption{Z Space}
\label{fig:people2}
\end{figure*}

\section{Analysis on vMF and Gaussian}
\label{appendix:appendix2}
In this section, we show empirically, vMF encourages topic separation naturally when comparing to Gaussian priors, especially in low dimensions. In the VAE training setting, we have the encoder network $\theta$ learning to transform document inputs $x$ into distribution parameters. Without loss of generality, we denote learned parameters $\vartheta_{i}$ which is updated in the training process and corresponds to latent space $\eta_{i} \sim q(\vartheta_{i})$.

Theoretically, the best $q$ should be able to approximate the posterior distribution $p(\eta_{i}|x)$; however, our choice of parametric distribution family in practice will always associate with our intentions, whether to reduce training time or increase expressability. The choice of prior and posterior distribution can be viewed as a form of regularization on our decoder network, which is arbitrarily powerful. Intuitively, distributions with fewer parameters will introduce more regularization at the cost of less flexibility, analog to bias variance trade off.

For p dimensional latent space, vMF is parameterized by p+1 variables while Gaussian is parameterized by 2*p variables assuming conditional independence or up to p(p+1)/2 + p variables assuming interdependence. In the extreme setting when labelled documents are less than $O(p^2)$, our encoder and decoder may overfit, learning identity mapping.

In the topic modelling space, a softmax transformation $\sigma$ is applied to $\eta$ to extract a probabilistic mixture of topics. In the independent Gaussian posterior case, we view affinity and confidence of the document to topic 1 is encoded in the first entry of $\mu$ and, $\sigma^2$ respectively. Ideally, we would want the encoder to offer variability in the sampling process to regularize, defined as difference in topic probability with initial training epochs; however, we will show through an example \ref{fig:22}, that Gaussian may learn identity mapping by predicting variance to be near 0. 

In the figure below, we define misaligned document as those documents such $argmax(\varsigma) != argmax(\eta)$. This can be viewed as a measure of regularization. In the Gaussian case, our encoder network learns identity mapping within the first epoch. Out of 120000 documents, only 200 or so documents were able to explore different spaces. vMF allows 1/6th of documents to vary and stabilizes after KL divergence kicks in. In trained latent spaces representation, we clearly see vMF learning more nuanced and structured data when comparing to Gaussian as you can see in \ref{fig:people2}

% \section{Ablations Study}

\section{Human Evaluation} \label{humanEval:sec}
We use the ratings and word intrusion tasks as human evaluations of topic quality. We recruit crowdworkers using Amazon Mechanical Turk inside Amazon Sagemaker. We pay workers 0.024 per ratings task and 0.048 per intrusion tasks. We select enough crowdworkers per task  so that p value for two sample t test between the best method and the second-best method is less than 0.05, resulting in a minimal of 18 crowd workers per topic for both tasks. Overall, we ask crowdsources to perform 1641 tasks and create 223 objects. It costs 77.89 for the whole labeling job(Internal price). The user interfaces are shown in Figure~\ref{intrusion} and Figure~\ref{rating}. 
\begin{figure*}
\centering
\includegraphics[scale = 0.3]{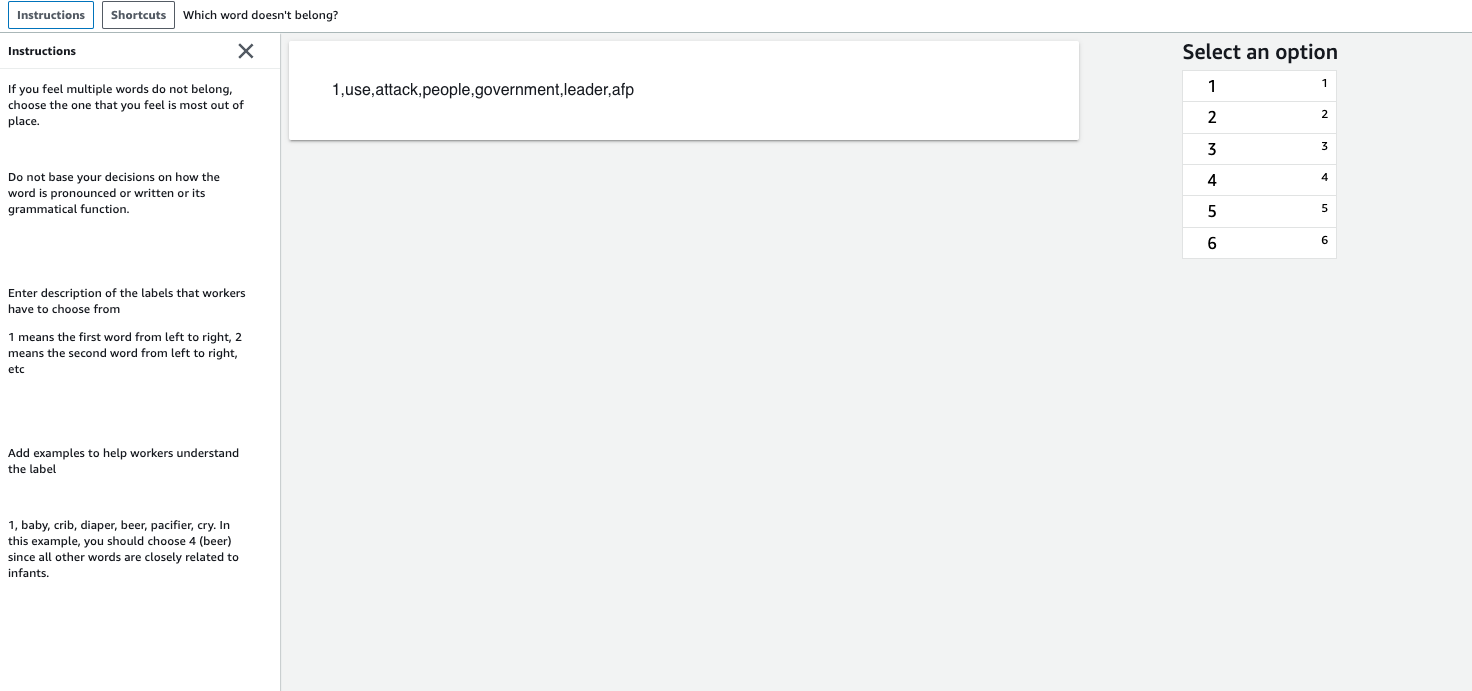}
\caption{User interface of intrusion task }
\label{intrusion}
\end{figure*}

\begin{figure*}
\centering
\includegraphics[scale = 0.3]{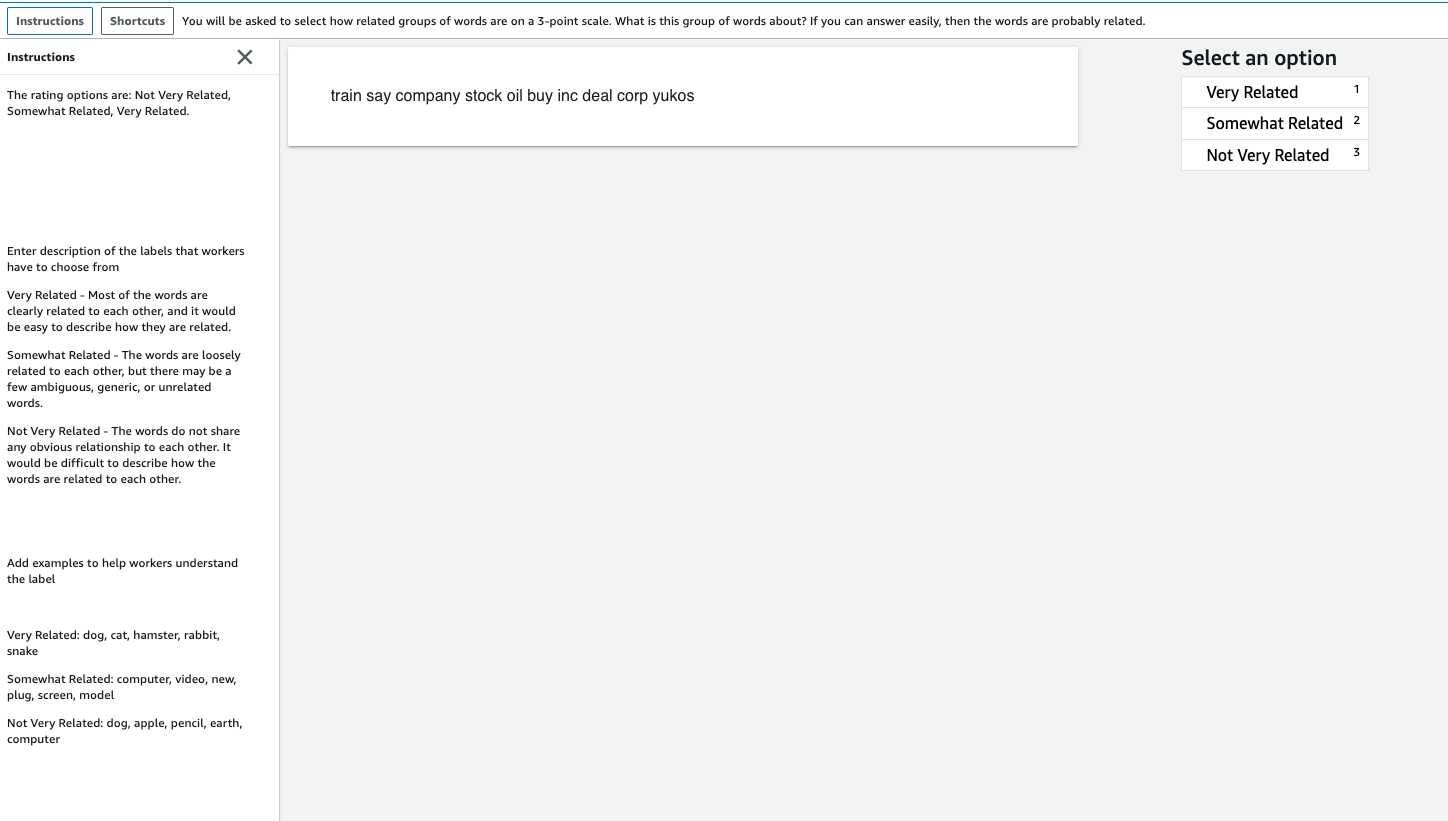}
\caption{User interface of rating task }
\label{rating}
\end{figure*}

We select AgNews as our dataset, we generate 10 topics each from 4 models. In the word intrusion task, we sample five of the ten topic words plus one intruder randomly sampled from the dataset; for the ratings task, we present the top ten words in order.

We also document the confidence per task generated by Amazon Mechanical Turk tool and average time per task for each task as can be seen below. For time spent, crowdsources spend 100 ~ 115 seconds per intrusion task and 70 ~ 80 seconds per rating task. Crowdsources spent 102.7 seconds on intrusion task generated by vONT which is lower than all other tasks. This means that it is easier for users to find intrusion word for topics generated by vONT. The confidence per rating task is between 0.88 to 0.94, where vONT has highest confidence 0.938 while LDA has lowest confidence 0.886. The confidence per intrusion task is between 0.74 to 0.86, where vONT has highest confidence 0.858 while ETM has lowest confidence 0.747. This means the crowdsources are in general more confident in their answer to questions that is generated by vONT.

\begin{figure*}
\centering
\includegraphics[scale = 0.35]{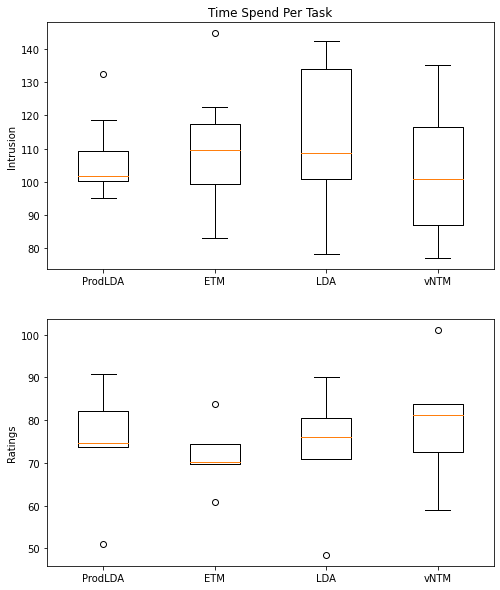}
\caption{Compare different methods' time spend per task }
\label{time}
\end{figure*}

\begin{figure*}
\centering
\includegraphics[scale = 0.35]{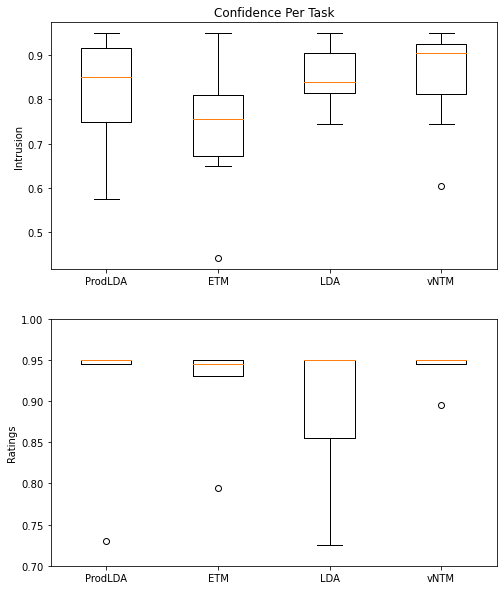}
\caption{Compare different methods'  confidence per task}
\label{confidence}
\end{figure*}

\section{Theoretical Analysis of vMF clusterability}
\label{theory}
In this section, we present theoretical intuition behind cluster inducing property of vMF distribution comparing to the normal distribution.

In the normal VAE set up, the encoder network learns mean parameter $\mu_{i}$ and variance parameter $\sigma_{i}$ for each document i. During the training process, we sample one data point, $\eta_{i}$ from the learned distribution and pass into the softmax function to represent a probability distribution of topics. To introduce high clusterability, we need sampled $\eta$ to have the ability to induce high confidence assignment to a topic under some form of regularization. In other words, with p number of topics, model can increase $argmax(softmax(\eta)) \in (1/p, 1)$ without additional penalty. 

We prove that under normal distribution and in the two dimensional case, it is impossible to increase $argmax(softmax(\eta))$ without increase KL divergence loss with respect to the prior $N(0,I)$. The KL divergence with p = 2 is $KL_{normal} = -\frac{1}{2}[2 + \log\sigma^{2}_{1} + \log\sigma^{2}_{2} - \mu^{2}_{1} - \mu^{2}_{2} - \sigma^{2}_{1} - \sigma^{2}_{2}]$
If we denote $p_{1}$ and $p_{2}$ to be expected distribution of topics, then $p_{1} = \frac{e^{\mu_{1}}}{e^{\mu_{1}} + e^{\mu_{2}}}$ and $p_{2} = \frac{e^{\mu_{2}}}{e^{\mu_{1}} + e^{\mu_{2}}}$. Without loss of generality, we assume that the document i is more aligned with the first topic, the model will learn and output $\mu_{1} > \mu_{2}$. To minimize KL defined above, $\mu_{1}$ and $\mu_{2}$ will be centered be around 0 with $\mu_{1} = -\mu_{2}$; however, in order to increase propensity of $argmax(softmax(\eta))$ or $p_{1}$, $\mu_{1}$ and $\mu_{2}$ have to increase and decrease respectively, forcing the KL divergence penalty to increase.

For vMF distribution, the KL divergence is
$$KL_{vMF} = \kappa\frac{I_{\frac{M}{2}}(\kappa)}{I_{\frac{M}{2}-1}(\kappa)} + (\frac{M}{2} - 1) \log \kappa - \frac{M}{2} \log (2\pi)  - \log I_{\frac{M}{2}-1}(\kappa) $$ $$+ \frac{M}{2} \log \pi + \log 2 + \log \Gamma(\frac{M}{2})$$
We note that the KL penalty under vMF case is not associated with $\mu$, thus the model can increase the propensity without increasing regularization penalties. The KL divergence of vMF
distribution also makes $\kappa$ small, inducing the generated topic
distribution to be localized. If a data point is far different
from any direction parameter $\mu$, the reconstruction loss will
be high as $\kappa$ is small. Thus, $\mu$ should be as representative as
possible which makes it more clustered.

\section{Speed}
We run each model 10 times with different seeds to evaluate how long it takes to finetune the model by modifying 20 percent of keywords set. 
\begin{table*}[ ]

\centering
\begin{tabular}{| c| r| r } 
\hline
\multicolumn{1}{c}{} \vline &
\multicolumn{1}{c}{20News}\vline  & \multicolumn{1}{c}{AgNEWS} \vline \\
\hline
\multicolumn{1}{c}{CorEX} \vline & 2.18 &  94.98\\
\multicolumn{1}{c}{vONTSS} \vline & 0.51 &  51.33 \\
\multicolumn{1}{c}{GuidedLDA} \vline & 1.66 & 24.6 \\
\multicolumn{1}{c}{CatE} \vline & 104.30 & 888.61\\
\hline
\end{tabular}
\caption{\label{speed:tab}Fine Tuning in Seconds}
% \label{table:6}
\end{table*}

\section{Related Works}
\label{vMF}
Most of vMF based topic modeling methods does not incorporate variational autoencoders. Spherical Admixture Model (SAM) \cite{reisinger2010spherical} is the first topic modeling method that uses vMF distribution to model corpus $\mu$, topics and reconstructed documents. Kayhan \cite{batmanghelich2016nonparametric} combines vMF distribution with word embeddings and uses vMF to regenerate the center of topics. It is based on Dirichlet Process to get the proportion of topics for a certain document. Hafsa \cite{9547420} combines knowledge graph and word embeddings for spherical topic modeling. They use vMF distribution to model corpus $\mu$, word embeddings and entity embeddings. To compare, we use modified vMF to generate topic distributions over documents and adapt spherical word embeddings instead of modeling it using vMF. Our method scales well, optimizes fast and offers highly stable performance. The choice of spherical word embeddings also alleviates the sparsity issue among words. vNVDM \cite{xu2018spherical} is the only other method that combines vMF with variational autoencoders. \cite{xu2018spherical} proposes using vMF(.,0) in place of Gaussian as $p(Z)$, avoiding entanglement in the center. They also approximate the posterior $q_{\phi}(Z|X)$ = $vMF(Z;\mu,\kappa)$  where $\kappa$ is fixed to avoid posterior collapse. \textit{The above approach does not work well for two reasons.} Firstly, fixing $\kappa$ causes KL divergence to be constant, which reduces the regularization effect and increases the variance of the encoder.  Another concern with vMF distribution is its limited expressability when its sample is translated into a probability vector. Due to the unit constraint, $softmax$ of any sample of vMF will not result in high probability on any topic even under strong direction $\mu$. For example, when topic dimension $M$ equals to 10, the highest topic proportion of a certain topic is 0.23. We also have a different decoder.

NSTM \cite{zhao2020neural} uses optimal transport to replace KL divergence. Row and column represent topics and words. Instead, our method represents row and column as topics and keywords with M matrix also defined differently. \cite{xu2018distilled} uses optimal transport for topic embeddings, but with wasserstein distances as metric and jointly learns word embeddings. Instead, our algorithm keeps word embedding fixed during the training process to maintain stability.

\section{Ablation Study on Radius}
\label{radius}
Ablation study for radius parameter on AG-News where we set topics equal to 10: as we sweep temperature from 1 to 20, nmi increases and diversity decreases. Radius=10 has the best average rank over coherence based metrics in this temperature range. It has good diversity while has good coherence based metric. Temperature = 10 also has the best pruity score which make it useful for semi-supervised learning 
% If you use beamer only pass "xcolor=table" option, i.e. \documentclass[xcolor=table]{beamer}
\begin{table}
\centering

\begin{tabular}{|r|r|r|r|r|r|r|}
\multicolumn{1}{l}{temperature} & \multicolumn{1}{l}{Top-Purity} & \multicolumn{1}{l}{Top-NMI} & \multicolumn{1}{l}{KM-purity} & \multicolumn{1}{l}{KM-NIM} & \multicolumn{1}{l}{NPMI} & \multicolumn{1}{l}{$C_v$}  \\ 
\hline
1                               & 0.70735                    & 0.33626                 & 0.71006                       & 0.3392                     & 0.07476                  & 0.53402                 \\ 
\hline
2                               & 0.7636                     & 0.39615                 & 0.76408                       & 0.39653                    & \textbf{0.10407}                  & 0.60342                 \\ 
\hline
3                               & 0.79176                    & 0.42084                 & 0.79174                       & 0.42093                    & 0.09666                  & 0.61272                 \\ 
\hline
4                               & 0.80763                    & 0.43793                 & 0.80762                       & 0.43774                    & 0.10233                  & 0.62054                 \\ 
\hline
5                               & 0.82157                    & 0.45221                 & 0.82128                       & 0.45177                    & 0.10288                  & \textbf{0.6225}                  \\ 
\hline
6                               & 0.82232                    & 0.45522                 & 0.82221                       & 0.45498                    & 0.09325                  & 0.60377                 \\ 
\hline
7                               & 0.81163                    & 0.45198                 & 0.81151                       & 0.45124                    & 0.08936                  & 0.58558                 \\ 
\hline
8                               & 0.83201                    & 0.46658                 & 0.83202                       & 0.46466                    & 0.09089                  & 0.57832                 \\ 
\hline
9                               & 0.83013                    & 0.47905                 & 0.8301                        & 0.47413                    & 0.07968                  & 0.55148                 \\ 
\hline
10                              & \textbf{0.8353}                     & 0.47956                 & \textbf{0.83524}                       & 0.47599                    & 0.08726                  & 0.5656                  \\ 
\hline
11                              & 0.82824                    & 0.48175                 & 0.82812                       & 0.47662                    & 0.08048                  & 0.55159                 \\ 
\hline
12                              & 0.82555                    & 0.47746                 & 0.82604                       & 0.47259                    & 0.07785                  & 0.54873                 \\ 
\hline
13                              & 0.8268                     & 0.49485                 & 0.82719                       & 0.4852                     & 0.06195                  & 0.50852                 \\ 
\hline
14                              & 0.83481                    & 0.49908                 & 0.83552                       & 0.49035                    & 0.05528                  & 0.50998                 \\ 
\hline
15                              & 0.83189                    & 0.50736                 & 0.83336                       & 0.49562                    & 0.03945                  & 0.48949                 \\ 
\hline
16                              & 0.83049                    & 0.51134                 & 0.83171                       & 0.49854                    & 0.02964                  & 0.48032                 \\ 
\hline
17                              & 0.83023                    & 0.50305                 & 0.83103                       & 0.49246                    & 0.05862                  & 0.50878                 \\ 
\hline
18                              & 0.82232                    & 0.50624                 & 0.8247                        & 0.49461                    & 0.03647                  & 0.48372                 \\ 
\hline
19                              & 0.82955                    & \textbf{0.51175}                 & 0.83167                       & \textbf{0.49915}                    & 0.03496                  & 0.48405                 \\
\hline

\end{tabular}
\caption{\label{temperature} Evaluate the influence of radius on coherence and clusterability related metric in Dataset AgNews. Temperature is from 1 to 20.  The best scores of is highlighted in boldface. The number of topis is 10}

\end{table}

\section{Ablation Study on $\kappa$}
\label{kappas}

Ablation study for Kappa on AG-News: we check kappa = 10, 50, 100, 500, 1000. Kappa=100 has highest purity and nmi, kappa = 50 has highest NPMI and $C_v$. Kappa = 500 has highest diversity. Our version of kappa has highest diversity, purity and NPMI compare to all fixed kappa. 
\begin{table}
\centering
\begin{tabular}{|r|r|r|r|r|r|r|r|}
\multicolumn{1}{l}{kappa} & \multicolumn{1}{l}{diversity} & \multicolumn{1}{l}{Top-Purity} & \multicolumn{1}{l}{Top-Nmi} & \multicolumn{1}{l}{Km-Purity} & \multicolumn{1}{l}{Km-Nmi} & \multicolumn{1}{l}{NPMI} & \multicolumn{1}{l}{$C_v$}  \\ 
\hline
10                        & 0.87                          & 0.78211                    & 0.4333                  & 0.78428                       & \textbf{0.42829}                    & 0.04752                  & 0.51337                 \\ 
\hline
50                        & 0.9904                        & 0.77143                    & 0.41847                 & 0.77266                       & 0.4183                     & 0.05275                  & \textbf{0.52367}                 \\ 
\hline
100                       & 0.9896                        & 0.7832                     & 0.42627                 & 0.78528                       & 0.42654                    & 0.05031                  & 0.51347                 \\ 
\hline
500                       & \textbf{0.9912}                        & 0.78176                    & 0.42274                 & 0.78325                       & 0.42291                    & 0.05113                  & 0.51655                 \\ 
\hline
1000                      & 0.9896                        & 0.76249                    & 0.40888                 & 0.76466                       & 0.40932                    & 0.04678                  & 0.5096                  \\ 
\hline
varied                    & 0.9902                        & \textbf{0.81}                       & \textbf{0.423}                   & \textbf{0.822}                         & 0.404                      & \textbf{0.054}                    & 0.49                    \\
\hline
\end{tabular}

\caption{\label{kappa} Evaluate the influence of learnable on coherence and clusterability related metric in Dataset AgNews. The best scores is highlighted in boldface. The number of topic is 20.}

\end{table}

\section{Diversity Evaluation on vONT}
\label{diveristy}
vONTSS has high diversity by design. As you can see in the table, vONT achieves the best diversity on R8 and AgNews. vONT is the second best on 20News dataset. It also has the lowest standard deviation compare to other methods.

\begin{table}
\centering

\begin{tabular}{|l|l|l|l|} 
\hline
method           & dataset & diversity & std      \\ 
\hline
NSTM           & R8      & 0.3672    & 0.02692  \\ 
\hline
NSTM           & 20News  & 0.55636   & 0.04306  \\ 
\hline
NSTM           & AgNews  & 0.974     & 0.00806  \\ 
\hline
vONT    & R8      & 0.9224    & 0.01613  \\ 
\hline
vONT    & 20News  & 0.9592    & 0.0257   \\ 
\hline
vONT    & AgNews  & 0.99022   & 0.01185  \\ 
\hline
vNVDM   & R8      & 0.52875   & 0.0868   \\ 
\hline
vNVDM  & 20News  & 0.9044    & 0.06152  \\ 
\hline
vNVDM  & AgNews  & 0.6224    & 0.03772  \\ 
\hline
GSM     & R8      & 0.3868    & 0.05126  \\ 
\hline
GSM     & 20News  & 0.6648    & 0.04766  \\ 
\hline
GSM    & AgNews  & 0.576     & 0.02091  \\ 
\hline
ETM      & R8      & 0.1224    & 0.01961  \\ 
\hline
ETM     & 20News  & 0.5024    & 0.03267  \\ 
\hline
ETM       & AgNews  & 0.4896    & 0.04975  \\ 
\hline
ProdLDA & R8      & 0.87429   & 0.05746  \\ 
\hline
ProdLDA & 20News  & 0.97143   & 0.04467  \\ 
\hline
ProdLDA & AgNews  & 0.88286   & 0.08379  \\
\hline
\end{tabular}
\caption{ Evaluate the diversity of vONT compare to other methods in all 3 datasets. The number of topic is 20.}

\end{table}

\section{Why not use language modeling based methods?}
\label{language}
Most language modeling methods are time-consuming to train and need a lot of transfer learning. They also need finetune in most of our use cases.  Without fine-tuning, \cite{Bianchi2021PretrainingIA} makes it harder to be used in domain-specific datasets. We have tried \cite{Yu2021FineTuningPL,Wang2021XClassTC} to compare, but both takes too much time to run. On AG-News, \cite{Yu2021FineTuningPL} takes 108 minutes to run, while \cite{Wang2021XClassTC} takes more than 2.5 hours. It also occurs in other models in footnote 2. vONTSS takes 8 minutes to run and 50 seconds to fine-tune.

We also tried some methods which only leverage embeddings of language modeling such as 
On AgNews and we set topics equal to 20, For \cite{wang-etal-2020-neural-topic}, diversity 0.71, $C_v$ 0.396, NPMI:-0.1089. For \cite{Bianchi2021PretrainingIA}, diversity 1, $C_v$ 0.435, NPMI:-0.1073. Except diversity in \cite{Bianchi2021PretrainingIA}, all other metric perform worse than vONT. 

For semi-spervised cases, we take keywords as input. It is really different from other weakly supervised learning formulations, and how to incorporate keywords into a language model is not straight forward. We have tried few methods, but it does take a lot of time to run and change their code is not easy since their effectiveness do rely on the specific version of language model. Thus, we exclude language modeling methods in our paper. Also, in our use case, each topic model is designed for a specific user or use case. It will be very hard to be interactive or store the model on user's side when the number of parameters is too large for every single model.

 \section{Limitations and Risks}
vMF distribution has a unit constraint. This limits the variability of latent space, which in turn reduces the gains as the number of topics increase. We can try other distributions with richer variability, such as Bivariate von Mises distribution and Kent distribution.

Also, in weakly supervised cases, vONTSS may not perform as well as those methods that leverage pretraining language models in classification. In the future, we can combine the structure of this model with existed language modeling to further improve its classification performance.

Lastly, in semi-supervised cases version, our formulation of vONTSS requires each topic to have at least one keyword. This limits its practical usage to some extent. To solve it, we can first preselect topics before doing the topics and keywords mapping, or we can modify the optimal transport loss using Gumbel distributions. 
\end{document}